\documentclass{article} 
\usepackage{iclr2026_conference,times}

\usepackage{amsmath,amsfonts,bm}

\usepackage{amssymb,amsthm}

\def\eqref#1{equation~\ref{#1}}

\def\1{\bm{1}}

\DeclareMathAlphabet{\mathsfit}{\encodingdefault}{\sfdefault}{m}{sl}
\SetMathAlphabet{\mathsfit}{bold}{\encodingdefault}{\sfdefault}{bx}{n}

\newcommand{\E}{\mathbb{E}}

\newtheorem{theorem}{Theorem}
\newtheorem{definition}{Definition}
\newtheorem{lemma}{Lemma}

\usepackage{hyperref}
\usepackage{url}
\usepackage{cleveref}

\usepackage{algorithm}
\usepackage{algpseudocode}
\usepackage{graphicx}
\usepackage{caption}
\usepackage{subcaption}
\usepackage{booktabs}
\usepackage{wrapfig,enumitem}
\usepackage[table]{xcolor}
\definecolor{RowGray}{gray}{0.94}
\usepackage[most]{tcolorbox} 
\usepackage{xcolor}          
\usepackage{inconsolata} 

\begin{document}

\title{Efficient Multi-objective Prompt Optimization via Pure-exploration Bandits}

\begin{center}
\author{Donghao Li$^\star$ \quad Chengshuai Shi$^\dagger$ \quad  Weijuan Ou$^\diamond$\quad Cong Shen$^\star$\quad Jing Yang$^\star$ \\
$^\star$ University of Virginia, Charlottesville, VA 22904, USA \\
$^\dagger$ Princeton University, 
Princeton, NJ 08544, USA\\
$^\diamond$ Southern University of Science and Technology, 
Shenzhen, Guangdong 518055, China\\
}
\end{center}

\newcommand{\Xc}{\mathcal{X}}
\newcommand{\Ic}{\mathcal{I}}
\newcommand{\Rc}{\mathcal{R}}

\iclrfinalcopy 

\maketitle

\begin{abstract}

Prompt engineering has become central to eliciting the capabilities of large language models (LLMs). At its core lies \emph{prompt selection} -- efficiently identifying the most effective prompts. However, most prior investigations overlook a key challenge: the inherently multi-faceted nature of prompt performance, which cannot be captured by a single metric. To fill this gap, we study the multi-objective prompt selection problem under two practical settings: Pareto prompt set recovery and best feasible prompt identification. Casting the problem into the pure-exploration bandits framework, we adapt provably efficient algorithms from multi-objective bandits and further introduce a novel design for best feasible arm identification in structured bandits, with theoretical guarantees on the identification error in the linear case. Extensive experiments across multiple LLMs show that the bandit-based approaches yield significant improvements over baselines, establishing a principled and efficient framework for multi-objective prompt optimization.
\end{abstract}

\section{Introduction}

Prompt engineering has become a practical way to leverage large language models (LLMs) without expensive, time-consuming fine-tuning~\citep{sahoo2024systematic,schulhoff2024prompt}. Early results show that zero-shot~\citep{radford2019language} and few-shot prompting~\citep{brown2020language} with a small number of examples can elicit strong performance from frozen models. More recently, chain-of-thought (CoT) prompting~\citep{wei2022chain,kojima2022large,zhang2022automatic} has further unlocked step-by-step reasoning, matching or even surpassing task-specific fine-tuning on several complex benchmarks. Prompting has also become foundational across adjacent areas, including retrieval-augmented generation (RAG)~\citep{kang2024prompt}, text-to-image generation~\citep{brade2023promptify,mo2024dynamic}, and jailbreak analysis and defenses~\citep{yan2023backdooring,mehrotra2024tree,xu2025survey}.

Generally speaking, prompt engineering comprises {\it manual prompt design}, in which experts craft prompts through intuition and iteration, and {\it automatic prompt optimization}, in which algorithms search the prompt space systematically. Examples of the latter include evolutionary search for high-performing prompts~\citep{guo2023connecting}; gradient-based methods such as AutoPrompt and APO~\citep{shin2020autoprompt,pryzant2023automatic}; reinforcement learning approaches that cast prompt construction as sequential decision making~\citep{deng2022rlprompt}; and methods that use an LLM itself as the optimizer~\citep{cheng2023black,tang2025unleashing}. Despite their algorithmic differences, these methods share a core challenge: prompt selection, i.e.,  {\it given a finite candidate prompt set $\mathcal{X}$ and a limited evaluation budget $B$, how should one allocate queries to identify the prompts that maximize task performance?}

Prior work on prompt selection spans Bayesian optimization~\citep{chen2023instructzero,sabbatella2024prompt}, discrete search~\citep{hu2024localized}, and bandit formulations~\citep{shi2024efficient,lin2024use,lin2024prompt,kong2025meta}.  Despite these advances, existing methods predominantly optimize a {\it single} objective (e.g., task accuracy), leaving broader trade-offs among multiple objectives largely addressed.
However, in many real-world applications, prompt performance is inherently multi-faceted, involving {\it multiple} objectives rather than a single metric. For example, in text summarization tasks, human and benchmark assessments consider coherence, faithfulness, fluency, and relevance, and no single metric captures all dimensions~\citep{fabbri2021summeval}. In text style transfer tasks, outputs must satisfy both content preservation and style adherence (e.g., modern-to-Shakespearean translation~\citep{DBLP:journals/corr/abs-1812-01097} and politeness transfer~\citep{madaan2020politeness}). In these multi-objective settings, a single prompt usually cannot be universally superior across all metrics. The trade-offs among these objectives require moving beyond scalarized prompt selection to procedures that preserve multiple criteria throughout the evaluation.

Meanwhile, research on multi-objective bandits offers principled tools for trade-off exploration under fixed evaluation budgets, including algorithms that learn with multiple criteria and handle explicit metric constraints with instance-dependent guarantees \citep{auer2016pareto,kone2024bandit,kone2025bandit,faizal2022constrained}. 
Yet this toolkit has not been systematically applied to multi-objective prompt selection.

In this work, we aim to bridge these areas by \textit{formulating multi-objective prompt selection within a bandit framework and developing a principled, bandit-based approach for efficient prompt selection under stringent prompt evaluation budget constraint}.  
Our major contributions are three-fold:
\begin{itemize}[leftmargin=*]\itemsep=0pt

    \item First, we bridge prompt selection under multiple evaluation criteria with the framework of multi-objective bandits. To the best of our knowledge, this is the first attempt to formalize multi-criteria prompt selection as a multi-objective bandit problem. This connection allows us to move beyond ad-hoc or single-metric prompt selection strategies and instead leverage the rich toolbox of multi-objective bandit algorithms. By doing so, we obtain a principled and efficient framework for balancing diverse evaluation criteria, leading to more robust and systematic prompt selection.
    
    \item Secondly, within this framework, we investigate two fundamental problems: best feasible prompt identification and Pareto prompt set identification. For the former, we introduce a general algorithm, \textsc{Gensec}, and provide a theoretical characterization of its error rate under the linear reward setting. For the latter, we propose another general algorithm, \textsc{GenPSI}, which unifies and generalizes existing bandit algorithms for both the standard and linear settings. Importantly, both \textsc{Gensec} and \textsc{GenPSI} are designed to accommodate general shared structures among prompts, thereby enhancing learning efficiency, particularly when the evaluation budget is limited.

    \item  We evaluate the performance of \textsc{Gensec} and \textsc{GenPSI} on two summarization benchmarks: XSum and CNN/DailyMail. For best feasible prompt identification, \textsc{Gensec} based algorithms recover over 80\% and 90\% of the utility of the optimal constrained prompt on the two tasks, respectively, whereas the baseline methods achieve only 20–50\%. For Pareto prompt set identification, \textsc{GenPSI} based algorithms recover more than 90\% of the hypervolume of the ground-truth Pareto set, while the baselines remain in the low-to-mid 80\% range.
\end{itemize}

\section{Related Works}

\textbf{Single-objective prompt selection.} InstructZero~\citep{chen2023instructzero} applies Bayesian optimization in the continuous space of soft prompts, while \citet{sabbatella2024prompt} uses Bayesian optimization over hard prompts by modeling prompts as $n$-gram phrases. ZOPO~\citep{hu2024localized} specializes in selection from a fixed candidate set and argues that a locally optimal prompt can outperform generation-based algorithms. Bandit-based methods improve sample efficiency via adaptive allocation and elimination~\citep{shi2024efficient,lin2024use,lin2024prompt,kong2025meta}: TRIPLE~\citep{shi2024efficient} casts selection as fixed-budget best-arm identification; INSTINCT~\citep{lin2024use} leverages NeuralUCB with transformer features to optimize instructions; APOHF~\citep{lin2024prompt} frames human-in-the-loop selection as a dueling bandit over pairwise preferences; and EXPO~\citep{kong2025meta} addresses non-stationarity in agentic settings with adversarial bandits.

\textbf{Multi-objective prompt engineering.} Multi-criteria prompt optimization has been receiving increasing attention due to its practical impact across various domains.
Evolutionary approaches, such as 
EMO-Prompts~\citep{baumann2024evolutionary}, InstOptima~\citep{yang2023instoptima}, and MOPO~\citep{resendiz2024mopo} use LLM-driven mutation and crossover to generate candidate prompts with high-quality trade-offs. Beyond evolutionary search, MORL-Prompt~\citep{jafari2024morl} adapts multi-objective RL with Pareto-aware policy gradient, 
and GEPA~\citep{agrawal2025gepa} combines natural-language reflection with Pareto-guided evolution. 
{\it While those work aim to achieve tradeoffs among different metrics in prompt optimization, they mostly focus on prompt generation while adopting uniform sampling as the selection method.}

For constrained prompt optimization, CAPO~\citep{zehle2025capo} adds cost awareness by combining evolutionary search with a length penalty, yielding prompts that balance task accuracy against token usage. Co-Prompt~\citep{cho2023discrete} focuses on token-wise prompt generation optimization, and uses another discriminator model to evaluate the generated token. {\it To the best of our knowledge, there are currently no prompt optimization studies that consider hard constraints, which is a key focus of our work. }

\textbf{Multi-objective bandits.} 
{\it Pareto set identification} in multi-armed bandits was first studied in the fixed-confidence setting \citep{auer2016pareto}. 
In the fixed-budget regime, \citet{kone2024bandit} introduced Empirical Gap Elimination (EGE), and showed that the misidentification probability decays exponentially with the budget, achieving the optimal rate up to constants.
\citet{kone2025bandit} further investigated the linear bandits setting, while \citet{zuluaga2013active,zuluaga2016pal} investigate the problem from a Bayesian perspective. 
In the {\it constrained bandits} setting, regret minimization was investigated in \citet{kagrecha2023constrained,pacchiano2021stochastic}, while pure exploration strategies have been studied in  
\citet{faizal2022constrained,bian2025asymp} recently. {\it In summary, there has been active research on multi-objective bandits, providing a rich set of tools for principled prompt optimization. However, existing work primarily focuses on bandits with simple structures, which limits their applicability in practical scenarios. Our work builds on the bandit framework and extends it with application-inspired designs to address these gaps.}

\section{Multi-objective Prompt Selection}
\label{sec:formulation}

We first introduce the basic notations used in the prompt-assisted {interactions with LLMs}~\citep{zhou2022large, chen2023instructzero, shi2024efficient}.
Let $x$ be a prompt, $\mathcal{D}=\{(q,a)\}$ a task dataset consisting of inputs $q$ and reference answers $a$, 
and $\mathcal{M}$ a black-box LLM that maps the prompt $x$ together with an input $q$ to a distribution over the output language space $\mathcal{Y}$. 
Given $(x,q)$, the LLM generates outputs according to 
$y \sim \mathcal{M}(q; x)$, where $y \in \mathcal{Y}$.

Due to the multi-criteria nature of prompt performance \citep{baumann2024evolutionary, yang2023instoptima,agrawal2025gepa}, we consider $m$ objectives represented by evaluation functions
\begin{equation}
    f_j : \mathcal{Y}\times \mathcal{Y} \to \mathbb{R}, \quad j=1,\dots,m,
\end{equation}
where each $f_j$ assigns a numerical score to an LLM output $y \in \mathcal{Y}$ given a reference answer $a\in\mathcal{Y}$. 
For each prompt $x \in \mathcal{X}$, its expected performance vector is
\begin{equation}
    \mu(x) = \E_{(q,a)\sim \mathcal{D}} \; \E_{y\sim \mathcal{M}(q;x)} 
    \big( f_1(y,a), \dots, f_m(y,a) \big) \in \mathbb{R}^m.
\end{equation}

Here, the inner expectation averages over the stochasticity of the LLM given a fixed input, while the outer expectation averages over the dataset. 
Thus $\mu(x)$ summarizes how prompt $x$ performs across all objectives on average. 
In addition, we assume that all metrics are defined such that larger values indicate better performance.

\begin{wrapfigure}{r}{0.33\textwidth}
  \centering
  \includegraphics[width=\linewidth]{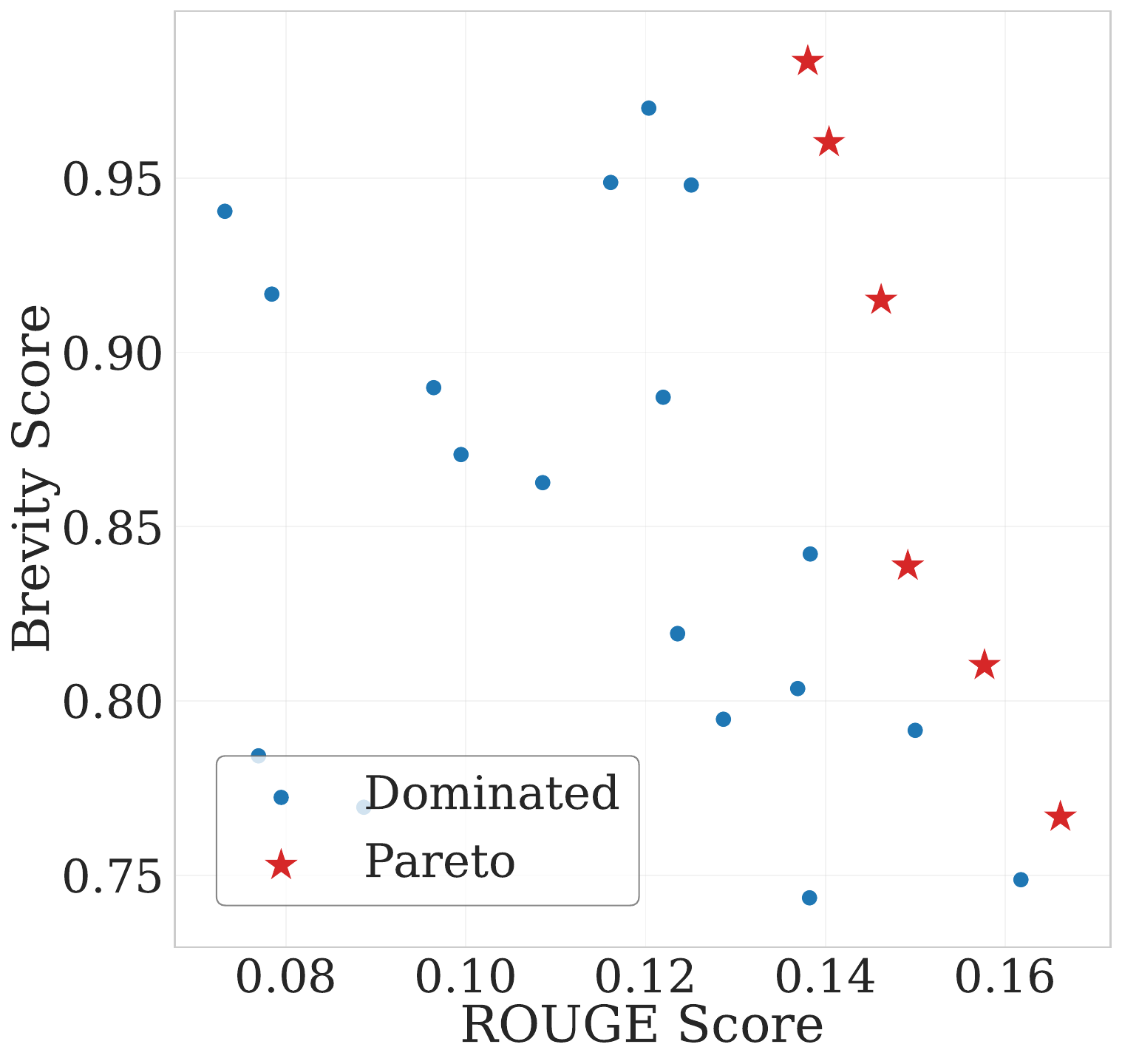}
  \vspace{-0.25in}
  \caption{\small Trade-offs between ROUGE and Brevity. } 
  \label{fig:tradeoff}       
  \vspace{-0.4in}
\end{wrapfigure}
\textbf{Motivating example.} 
\Cref{fig:tradeoff} illustrates the evaluation of prompts on the XSum dataset using the Llama3 model, where each point corresponds to a prompt’s ROUGE score (x-axis) and brevity score (y-axis). These two metrics exhibit inherent trade-offs, motivating joint consideration for prompt optimization.

In this work, we focus on the problem of multi-criteria prompt selection: given a set of candidate prompts $\Xc$ and evaluation metrics ${f_j}$, the objective is to identify a subset of prompts from $\Xc$ that achieves the desired trade-offs. We consider this problem under the following two settings: (i) best feasible prompt identification , and (ii) Pareto prompt set identification. 

\paragraph{Best feasible prompt identification.}  
The first target arises when one objective is designated as the \emph{primary} performance metric (e.g., task accuracy), while the remaining objectives act as constraints (e.g., safety above a threshold). 
Formally, let objective $j=1$ be the primary objective to maximize, and denote the constraints on the other objectives as $\{\tau_j\}_{j\neq 1}$.
A prompt $x$ is \emph{feasible} if
\begin{equation}
    \mu_j(x) \geq \tau_j, \quad \forall j \neq 1.
\end{equation}
The best feasible prompt is then defined as
\begin{equation}
    x^\star = \arg\max_{x \in \mathcal{X}} \; \mu_1(x) 
    \quad \text{s.t.} \quad \mu_j(x) \geq \tau_j,  \forall j \neq 1.
\end{equation}
This formulation is especially relevant for safe or regulated domains where utility must be achieved under explicit constraints. In the following designs, we assume that there must exist at least one prompt that is feasible for the considered $\{\tau_j\}_{j\neq 1}$.

\paragraph{Pareto prompt set identification.} 
The second target is to recover the Pareto prompt set, which is the set of prompts that is not strictly dominated by any other prompt, as defined in the following.
\begin{definition}[Pareto Optimality]
\label{def:pareto}
A prompt $x \in \mathcal{X}$ is \emph{Pareto optimal} if there does not exist another prompt $x' \in \mathcal{X}$ such that $\mu_j(x') \geq \mu_j(x)$ for all $j \in \{1,\dots,m\}$ and $\mu_j(x') > \mu_j(x)$ for at least one $j$. 
\end{definition}
The set of Pareto-optimal prompts is denoted $\mathcal{X}^\star$, and $\{\mu(x): x\in\mathcal{X}^\star\}$ is the \emph{Pareto front}.
In the absence of explicit constraints, Pareto set identification is the most general target for multi-objective optimization, which captures the best achievable trade-offs among objectives.

\paragraph{Fixed budget constraint.} In practice, prompt evaluations are costly because each trial requires querying an LLM on multiple data examples and metrics. Therefore, in this work, we consider a fixed budget setting, where the total number of evaluations in the optimization procedure is upper bounded by $B$.
Then, given a fixed budget constraint $B$, the learner’s goal is to identify (i) the optimal feasible prompt $x^*$, or (ii) the Pareto set of prompts $\Xc^*$, as accurate as possible. 

\section{Prompt Optimization via Pure-exploration Bandits} Recent work \citep{shi2024efficient} shows that prompt optimization can be formulated as a best arm identification (BAI) problem and solved efficiently by leveraging the rich toolbox from BAI in multi-armed bandits.
While extensive experiments demonstrate the remarkable performance improvement of such an approach over baselines for a single performance metric, to the best of our knowledge, leveraging bandits algorithms to solve the more general multi-objective prompt optimization has not been studied previously. To fill this gap, in the following, we formulate the multi-objective prompt optimization problem under a fixed budget constraint as a pure-exploration multi-objective bandits. 

Specifically, 
we model the prompt set $\mathcal{X}$ as the set of arms in a multi-armed bandits, and pulling an arm corresponds to evaluating a prompt $x \in \mathcal{X}$ on a randomly sampled input $(q,a)\in\mathcal{D}$, which produces an output $y \sim \mathcal{M}(q;x)$. 
The evaluation metrics $(f_1(y,a), f_2(y,a), \dots, f_m(y,a))$ then serve as the stochastic \emph{reward vector} observed for that pull. 
Thus, the expected reward of each arm is denoted as $\mu(x)$, which captures the mean performance of prompt $x$ across all objectives. 

Under the given budget constraint $B$, our objective is to leverage the algorithms from pure-exploration bandits to efficiently identify the optimal feasible prompt and the Pareto set of prompts (arms), respectively.

In practice, the candidate prompt set could be very large. Treating each prompt independently may not be very cost efficient, especially when the total budget is limited. On the hand other, the candidate prompts may inherently have certain correlations, and exploiting such correlation can potentially speed up the learning process and improve the learning accuracy. Motivated by those observations, we introduce a general bandits model to capture such inherent dependency among prompts.

Specifically, we let $\phi:\mathcal{X}\to\mathbb{R}^d$ be a known feature map that embeds a given prompt $x$ to a $d$-dimensional feature vector $\phi(x)$. The expected reward of $x$, denoted as $\mu(x)$, is then equal to $g_\theta(\phi(x))$, where $g_\theta:\mathbb{R}^d\to\mathbb{R}^m$ is a function parameterized by an {\it unknown} parameter $\theta$. Since $g_\theta$ is shared across all prompts, the observations obtained by evaluating any of the prompts will contribute to the estimation of $\theta$, potentially speeding up the learning process.

For ease of exposition, in the following, we assume $m=2$, i.e., there are two metrics for the prompt performance, denoted as $\mu_1$ and $\mu_2$, respectively. The algorithm design and analysis can be extended for a general $m$. 

\section{Best Feasible Prompt Identification}
\label{sec:BFPI}

For best feasible prompt identification, the learner seeks a prompt that maximizes a primary objective while satisfying feasibility conditions on a secondary criteria, under a fixed budget constraint. Under the corresponding bandits formulation, bandits algorithm Constrained Successive Rejects (CSR) \citep{faizal2022constrained} is shown to be able to obtain the optimal feasible arm with an exponentially decaying error probability for stochastic bandits. 
                            
\subsection{General Framework}
Under the general bandits formulation, we propose a round based arm elimination framework. 
The process consists of $R$ rounds, each having $n_r$ pulls. In total, it has $\sum_{r=1}^R n_r=B$. Each round begins with an active arm set $A_{r-1}$, and at the end of each round, only $l_r$ arms will be kept. $(R,\{n_r\}_{r=1}^R,\{l_r\}_{r=1}^R)$ are determined beforehand through a \textsc{scheduler}, such as Successive Rejection~\citep{audibert2010best} or Sequential Halving~\citep{karnin2013almost}. We set $l_R=1$, which corresponds to the estimated best feasible prompt. As the process proceeds, the major tasks in each round $r$ are as follows. 

\textit{(i) Budget allocation.} each round $r$ starts from the active set $A_{r-1}$ and a planned budget $n_r$. The step determines which arms to be pulled for the given budget $n_t$. Denote $(x^{(1)},\ldots,x^{(n_r)})$ as the list of arms to pull. {The budget allocation for the active arms can be flexible as long as a sufficient exploration on the active arms is performed (e.g., uniformly sampling each arm, or adopting G-optimal design to cover the feature space). }

\textit{(ii) Arm pulling and evaluation.} For each $t\in [n_r]$, sample $(q_t,a_t)\sim\mathcal{D}$, query the LLM to generate $y_t\sim\mathcal{M}(q_t;x^{(t)})$, evaluate $y_t$ and obtain $f^{(t)}=(f_1(y_t,a_t),f_2(y_t,a_t))$. The new observations $\phi(x^{(t)}, f^{(t)})$ will be included in the collected datasets $X_r$ and $Y_r$.

\textit{(iii) Reward estimation.} An estimator will then utilize $(X_r,Y_r)$ to estimate the unknown parameter $\theta$, based on which the estimated performance $\widehat{\mu}(x)$ can be obtained for each $x\in A_{r-1}$. {The estimation can also be performed in flexible ways. In the simplest approach, sample means can be used without accounting for prompt features. More efficient alternatives can be developed based on the considered parameterization function $g_\theta$, e.g., (regularized) least squares.} 

\textit{(iv) Arm elimination.} Due to the multi-objective setting, the arm elimination step should jointly consider the optimality and feasibility of the active arms. For that purpose, it first forms the empirical feasible arm set $\widehat{\mathcal{F}}_r$ by checking whether $\widehat{\mu}_{2,r}(x)>\tau$.
A key component is the elimination step. At the end of round $r$, we first rank the arms by placing the empirical feasible arms before the empirical infeasible arms. 
Within the empirical feasible set, arms are ordered by decreasing primary reward estimate $\widehat{\mu}_{1,r}(x)$; within the empirical infeasible set, arms are ordered by decreasing constraint estimate $\widehat{\mu}_{2,r}(x)$. 
We then keep the top $l_r$ arms in this ranking and eliminate the rest, forming the active set $A_r$ for the next round.

Based on the ordering elimination, the learner then eliminates the arms that are determined to be infeasible or suboptimal, and keeps the $l_r$ arms that is more likely to be optimal feasible to form $A_r$. 
It then proceeds to the next round. The algorithm is presented in \Cref{alg:LCSR}. 

\begin{algorithm}[t]
\caption{GENeralized Successive Elimination under Constraints \textsc{(Gensec)}}
\label{alg:LCSR}
\begin{algorithmic}[1]
\State \textbf{Input:} budget \(B\), constraint threshold \(\tau\), prompt set \(\Xc\), feature map \(\phi(\cdot)\), dataset \(\mathcal{D}\)
\State \textbf{Initialization:} \(A_0\gets \Xc\); \((R,\{n_r\}_{r=1}^R,\{l_r\}_{r=1}^R)\gets \textsc{Scheduler}(K,B)\); \(X_0\gets\emptyset, Y_0\gets\emptyset\)
\For{$r=1:R$}
  \State $(x^{(1)},\ldots,x^{(n_r)})\gets \textsc{Allocator}(n_r, A_{r-1})$ 
  \State  At each step $t \in \{1,\ldots, n_r\}$, pull arm $x^{(t)}$ with feature $\phi(x^{(t)})$, collect evaluation $f^{(t)}$, and update observations as $X_r \leftarrow X_{r-1}\cup\{\phi(x^{(t)})\}, Y_r \leftarrow Y_{r-1}\cup\{f^{(t)}\}$
  \State Obtain estimator $\widehat{\mu}_r(x)\leftarrow \textsc{Estimator}(x;X_r,Y_r)$, $\forall x\in A_{r-1}$
  \State Construct the empirically feasible set and the empirically infeasible set:
    \begin{align*}
        & \widehat{\mathcal{F}}_r\gets \{x\in A_{r-1}:\widehat{\mu}_{2,r}(x)>\tau\}, \\
        & \widehat{\mathcal{F}}_r^c\gets \{x\in A_{r-1}:\widehat{\mu}_{2,r}(x)\le\tau\}
    \end{align*}
    \State Sort the arms in $\widehat{\mathcal{F}}_r$ in decreasing order of $\widehat{\mu}_{1,r}(x)$, followed by the arms in $\widehat{\mathcal{F}}_r^c$, ordered in decreasing $\widehat{\mu}_{2,r}(x)$
    \State Let $A_r$ consist the first $l_r$ arms in this ordering
\EndFor
\State \textbf{Output:} \(A_R\)
\end{algorithmic}
\end{algorithm}

\subsection{Theoretical Analysis with Linear Reward Functions} 
We instantiate the general framework to a special case where the reward function $g_\theta(\phi(x))$ is linear in both $\phi(x)$ and $\theta$, i.e., $\mu(x)=\phi(x)^\top \theta$, where $\phi(x)\in \mathbb{R}^d$ and $\theta\in \mathbb{R}^{d\times m}$. This recovers the classical linear bandits setting. The following theoretical guarantee can be obtained.

\begin{theorem}[Informal version of \Cref{thm:LCSR_formal}]
\label{thm:LCSR}
Assume total budget $B \geq 45 d \left\lceil \log_2 K \right\rceil$. With \textsc{Scheduler} being Sequential Halving, \textsc{Allocator} using the G-optimal design~(\Cref{apd:G-optimal}), and \textsc{Estimator} based on least squares, 
the probability that \Cref{alg:LCSR} fails to return $x^\star$ satisfies
   $ \Pr\big[x^\star\notin A_R\big] \leq 48 \left\lceil \log_2 K \right\rceil
\cdot \exp\left\{ - \frac{c_1}{dH} \cdot \left\lfloor \frac{B}{\lceil \log_2 K\rceil}\right\rfloor \right\},$ 
where $c_1$ is a positive constant, $H = \max_{ x\in \mathcal{X}\setminus\{x^*\}} \frac{1}{\Delta(x)^2}$, and $\Delta(x)$ is the constrained gap defined as $\Delta(x) = \min \left\{ \max \left( \tau - \mu_2(x), \mu_1(x^*) - \mu_1(x) \right), \mu_2(x^*) - \tau \right\}.$ 
\end{theorem}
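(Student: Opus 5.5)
We follow the standard Sequential Halving analysis for linear bandits, combined with the constrained ranking argument of CSR \citep{faizal2022constrained}. The scheduler gives $R=\lceil\log_2 K\rceil$ rounds with $n_r=\lfloor B/R\rfloor$ pulls each and $l_r=\lceil K/2^r\rceil$ surviving arms. Since $x^\star\notin A_R$ forces $x^\star$ to be eliminated in some round, a union bound gives $\Pr[x^\star\notin A_R]\le\sum_{r=1}^R\Pr[x^\star\in A_{r-1},\,x^\star\notin A_r]$. It therefore suffices to bound each round's elimination probability by $48\exp\{-c_1 n_r/(dH)\}$, or by a sum of such terms whose constants add up to the stated prefactor.

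\textbf{Step 1 (per-round concentration).} In round $r$, the G-optimal design over $\{\phi(x):x\in A_{r-1}\}$, rounded to an integer allocation, yields a design matrix $V_r$ with $\max_{x\in A_{r-1}}\|\phi(x)\|_{V_r^{-1}}^2\le 2d/n_r$. This rounding is where the condition $B\ge 45d\lceil\log_2K\rceil$ enters: it ensures $n_r$ is large compared with the support size of the design (at most $d(d+1)/2$, or $O(d\log\log d)$ after Kiefer--Wolfowitz-type rounding). We estimate by least squares using only the fresh round-$r$ data, so that the noise is independent of the design. Standard sub-Gaussian concentration, applied to each objective $j$ separately, then gives for any fixed $x$ and any $\epsilon>0$
\[
\Pr\big[|\widehat\mu_{j,r}(x)-\mu_j(x)|\ge\epsilon\big]\le 2\exp\{-c\,n_r\epsilon^2/d\}.
\]
The same bound holds for differences $\widehat\mu_{1,r}(x)-\widehat\mu_{1,r}(x^\star)$, with a constant loss in $c$.

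\textbf{Step 2 (how $x^\star$ can be eliminated).} The arm $x^\star$ is removed in round $r$ only if at least $l_r$ other arms rank above it. There are two cases.
\begin{itemize}[leftmargin=*]
\item[(a)] $x^\star$ is judged empirically infeasible, i.e., $\widehat\mu_{2,r}(x^\star)\le\tau$. Since $\mu_2(x^\star)-\tau\ge\Delta(x)$ for every $x$, this event has probability at most $2\exp\{-c n_r/(dH)\}$.
\item[(b)] $x^\star$ is judged feasible, yet at least $l_r$ arms $x$ are empirically feasible with $\widehat\mu_{1,r}(x)>\widehat\mu_{1,r}(x^\star)$.
\end{itemize}
For any single arm $x$, case (b) requires either $\widehat\mu_{2,r}(x)>\tau$ or $\widehat\mu_{1,r}(x)\ge\widehat\mu_{1,r}(x^\star)$. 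Because $\max(\tau-\mu_2(x),\mu_1(x^\star)-\mu_1(x))\ge\Delta(x)$, one of these two quantities is off from its mean by at least $\Delta(x)/2$. Hence each such "bad overtaking" event has probability at most $4\exp\{-c n_r\Delta(x)^2/(4d)\}\le 4\exp\{-c' n_r/(dH)\}$.

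\textbf{Step 3 (counting, the main obstacle).} Case (b) requires $l_r\approx|A_{r-1}|/2$ bad overtaking events to occur simultaneously, and we control this with the usual Markov argument. Let $N_r$ be the number of arms in $A_{r-1}\setminus\{x^\star\}$ that overtake $x^\star$. Then $\Pr[N_r\ge l_r]\le\E[N_r]/l_r\le(|A_{r-1}|/l_r)\cdot4\exp\{-c'n_r/(dH)\}\le 8\cdot4\exp\{-c'n_r/(dH)\}$. Combining with case (a) gives a per-round constant around $48$, and summing over the $R$ rounds gives the theorem with $c_1=\min(c,c')$.

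The delicate part is making sure the gap $\Delta(x)$ correctly covers infeasible arms that compete with $x^\star$ via $\widehat\mu_2$ ordering. Such arms only matter if $x^\star$ is itself misclassified, which is exactly case (a). We also need to verify that using the worst-case $H$, rather than round-dependent gaps, keeps the argument free of any sorting of gaps. Using $H$ avoids that complication entirely, at the cost of a weaker but still valid bound.
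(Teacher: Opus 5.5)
Your proposal is correct and is essentially the paper's own proof. It has the same four ingredients: per-round G-optimal/least-squares concentration (Lemma~\ref{lem:1}, imported from \citet{kone2025bandit}), a per-arm bound on overtaking $x^\star$ via the constrained gap (Lemma~\ref{lem:2}), Markov's inequality on the number of arms ranked above $x^\star$ (Lemma~\ref{lem:3}), and a union bound over the $\lceil\log_2 K\rceil$ rounds. Your one deviation is to charge the event $\widehat\mu_{2,r}(x^\star)\le\tau$ once per round instead of inside every arm's overtaking probability, as the paper does; this is harmless and slightly improves the constant.

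Two small fixes are needed:
\begin{itemize}
\item In Step~2 the overtaking event is the \emph{intersection} of $\{\widehat\mu_{2,r}(x)>\tau\}$ and $\{\widehat\mu_{1,r}(x)>\widehat\mu_{1,r}(x^\star)\}$, not an ``either/or''. The union is likely for, e.g., a feasible suboptimal arm, and only the intersection lets whichever true gap is at least $\Delta(x)$ force a $\Delta(x)/2$ deviation.
\item The requirement $n_r\ge 45d$ comes from the rounding step of \citet{allen2017near} with $\kappa\le 1/3$, which needs $n_r\ge 5d/\kappa^2$. It does not come from the support size of the design.
\end{itemize}
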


Compared to the existing upper bounds in the stochastic bandit problem \citep{faizal2022constrained}, our result makes a significant improvement in the dependency on \( K \). Specifically, while the existing results exhibit a dependence of \( K^3 \) in the leading coefficient, our upper bound only depends on \( \log_2 K \), greatly reducing the impact of \( K \) on the error probability. Furthermore, although we adopt a different definition for \( H \), both results show {similar} influence of the budget \( B \) and the constrained gap on the error upper bound.

\paragraph{Proof sketch of \Cref{thm:LCSR}.}
At a high level, the proof shows that the algorithm rarely eliminates the optimal feasible arm $x^\star$. 
It consists of the following major steps. {\bf Step 1:} establish uniform concentration bounds for the empirical estimates, {which is derived from a self-normalized concentration inequality for linear models} as shown in Lemma~\ref{lem:1}. {\bf Step 2:} use the uniform concentration bounds to show that any suboptimal or infeasible arm appears better than $x^\star$ only with exponentially small probability (see Lemma~\ref{lem:2}). {During this process, \textbf{three types of arms} are carefully considered: feasible but suboptimal arms, deceiver arms (infeasible but better than the optimal feasible arm on the primary objective), and infeasible sub-optimal arms, which brings more complicated failure modes compared with previous single-objective analyses}. {\bf Step 3:} Lemma~\ref{lem:3} argues that $x^\star$ can only be eliminated if many such arms are simultaneously misleading, which is very unlikely. Finally, {\bf Step 4}  combines the error probabilities across all rounds via a union bound. 
This yields an error bound that decays exponentially in the budget $B$, up to logarithmic factors.

\section{Pareto Prompt Set Identification} 
In this section, we investigate the Pareto set identification problem and propose an algorithm named Generalized Pareto Set Identification \textsc{(GenPSI)}, which can be found in \Cref{apd:PSI}. Compared with \Cref{alg:LCSR}, \textsc{GenPSI} shares the same components such as {\it budget allocation}, {\it arm pulling and evaluation}, and {\it reward estimation}. The major difference lies in the last component, i.e., {\it arm elimination}. 
Due to the different objectives between best feasible arm identification and Pareto set identification, we use the empirical Pareto gap \citep{kone2024bandit} as the metric for arm elimination. 

We denote the empirical Pareto set as $\widehat{\mathcal{X}}^\star$, and employ the following notations\allowdisplaybreaks
\begin{align*}
& \widehat{m}(x,y) = \min_{i\in[m]} \widehat{\mu}_i(y)-\widehat{\mu}_i(x),
\qquad
\widehat{M}(x,y) = \max_{i\in[m]} \widehat{\mu}_i(x)-\widehat{\mu}_i(y),\\
& \widehat{\delta}^{+}(x) = \min_{\,y\in \widehat{\mathcal{X}}^\star\setminus\{x\}} \min \left(M(x,y),\,M(y,x)\right), \\
& \widehat{\delta}^{-}(x) = \min_{y\notin \widehat{\mathcal{X}}^\star}\left(\max (\widehat{M}(y,x),0) + \max_{y'\in \widehat{\mathcal{X}}^\star} \widehat{m}(y,y')\right).
\end{align*}
Then, the empirical Pareto gap is defined as
\[
\widehat{\Delta}(x) =
\begin{cases}
 \max_{y\in \widehat{\mathcal{X}}^\star} \widehat{m}(x,y), & x\notin \widehat{\mathcal{X}}^\star,\\ \min\bigl\{\widehat{\delta}^{+}(x), \widehat{\delta}^{-}(x)\bigr\}, & x\in \widehat{\mathcal{X}}^\star.
\end{cases}
\]
The Pareto gap measures the difficulty of classifying an arm to be Pareto or sub-optimal.

We note that \textsc{GenPSI} recovers the Empirical Gap Elimination (EGE)~\citep{kone2024bandit} for stochastic bandits and G-optimal Empirical Gap Elimination (GEGE)~\citep{kone2025bandit} for linear bandits when the corresponding \textsc{Allocator} and \textsc{Estimator} are set in the same form.

\section{Experiments}

\paragraph{Datasets.} We evaluate the prompt selection algorithms on two standard summarization benchmarks: \textbf{XSum} \citep{Narayan2018DontGM} and \textbf{CNN/DailyMail} \citep{hermann2015teaching}.
The XSum dataset contains approximately 227,000 documents paired with concise one-sentence summaries, whereas the CNN/DailyMail dataset comprises roughly 311,000 article-summary pairs with multi-sentence outputs.

\paragraph{Candidate prompt generation.} We generate candidate prompts $\mathcal{X}$ using LLaMA-3. 
For each dataset, $200$ prompts are generated based on randomly sampled examples from the dataset{~\citep{zhou2022large}}. 
The generated prompts are then manually filtered to remove some completely irrelevant prompts and down-sampled to form the final prompt pool of size $100$. 
Within the prompt pool, we further sample 50 and 30 prompts to form smaller candidate prompt sets.

\paragraph{Models.} We evaluate the performance of candidate prompts on two LLMs: instruction-tuned LLaMA-3-8B-instruct and Gemma-7B-it. Given a prompt $x$ and a query $q$, the LLM generates an output using greedy decoding with a maximum of 512 new tokens, which will then be evaluated according the metrics below.

\paragraph{Metrics.} We evaluate LLM generated responses on two metrics: 
\textbf{ROUGE-L F1} \citep{lin2004rouge}, measuring the overlap with the reference summary, and 
\textbf{Brevity score}, measuring the token lengths. 
The details of the reward definition can be found in \Cref{append:rewards}.

\paragraph{Feature extraction.}
To facilitate the general bandits formulation, we use GPT-3.5-turbo to extract the embeddings of the prompts. 
Let $e(x)\in\mathbb{R}^p$ be the extracted embedding for prompt $x\in \Xc$. We then 
perform principle component analysis (PCA) to obtain a reduced feature 
representation. Let $U_d\in\mathbb{R}^{p\times d}$ denote the matrix of the 
top-$d$ eigenvectors of the sample covariance of $\{e(x)\}_{x\in \Xc}$. 
The reduced feature of $x$ is then
\(
\phi(x) \;=\; U_d^\top \bigl(e(x)-\bar{e}\bigr) \in \mathbb{R}^d ,
\)
where $\bar{e}=\tfrac{1}{K}\sum_{x\in \Xc} e(x)$.

\subsection{Best Feasible Prompt Identification}
In this subsection, we evaluate our constrained prompt selection algorithm, \textsc{Gensec}, where the objective is to maximize task utility under a prescribed brevity constraint.
We instantiate \textsc{Gensec} into two variants, namely, CSR and MLP-CSR.

\textbf{CSR.} We first treat prompts independently, and adopt Successive Rejection, uniform pulling and sample averaging as the Scheduler, Allocator and Estimator, respectively, under which \textsc{Gensec} reduces to CSR. 

\textbf{MLP-CSR.} For the general bandits setting, we use an MLP consisting of a ReLU neural network with one hidden layer of 30 hidden states to model the reward function $g_\theta(\cdot)$. We set the Scheduler to be Sequential Halving to reduce the training rounds.

\begin{figure*}[t]
  \centering
  \includegraphics[width=\linewidth]{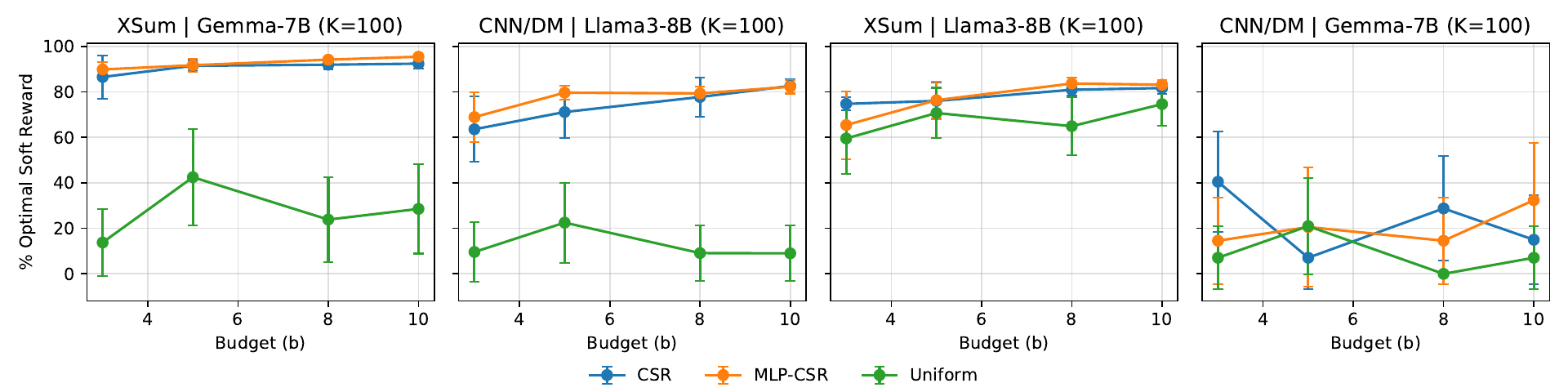}
  \caption{Average soft constrained reward vs. per-arm budget. Error bars denote 95\% confidence intervals over 20 random seeds. }
  \label{fig:ct-recovery-row}
\end{figure*}

\begin{table*}[t]
\centering
\vspace{-0.1in}
\caption{Average soft constrained reward on XSum  - Gemma. }
\label{tab:feasiR-xsum_main}
\vspace{-0.1in}
\setlength{\tabcolsep}{6pt}
\renewcommand{\arraystretch}{1.05}
\begin{tabular}{c c c c c c}
\toprule
$\mathbf{K}$ & \textbf{Method} & {$\mathbf{b=3}$} & {$\mathbf{b=5}$} & {$\mathbf{b=8}$} & {$\mathbf{b=10}$} \\
\midrule
~ & Uniform                       & $0.015\pm 0.010$ & $0.000\pm 0.000$ & $0.030\pm 0.013$ & $0.037\pm 0.014$ \\
\rowcolor{RowGray}\cellcolor{white} 30 & CSR      & $0.117\pm 0.013$ & $0.137\pm 0.007$ & $\mathbf{0.144\pm 0.000}$ & $\mathbf{0.143\pm 0.000}$ \\
\rowcolor{RowGray}\cellcolor{white}  & MLP-CSR  & $\mathbf{0.123\pm 0.010}$ & $\mathbf{0.139\pm 0.002}$ & $0.140\pm 0.002$ & $0.142\pm 0.001$ \\
\midrule
~ & Uniform                       & $0.021\pm 0.011$ & $0.021\pm 0.011$ & $0.037\pm 0.014$ & $0.036\pm 0.014$ \\
\rowcolor{RowGray}\cellcolor{white} 50 & CSR      & $0.122\pm 0.012$ & $\mathbf{0.143\pm 0.001}$ & $0.141\pm 0.002$ & $0.140\pm 0.002$ \\
\rowcolor{RowGray}\cellcolor{white}  & MLP-CSR  & $\mathbf{0.143\pm 0.002}$ & $0.142\pm 0.002$ & $\mathbf{0.147\pm 0.001}$ & $\mathbf{0.142\pm 0.002}$ \\
\midrule
~ & Uniform                      & $0.021\pm 0.011$ & $0.066\pm 0.016$ & $0.037\pm 0.014$ & $0.044\pm 0.015$ \\
\rowcolor{RowGray}\cellcolor{white} 100 & CSR     & $0.134\pm 0.007$ & $0.141\pm 0.002$ & $0.142\pm 0.001$ & $0.143\pm 0.002$ \\
\rowcolor{RowGray}\cellcolor{white}  & MLP-CSR & $\mathbf{0.139\pm 0.002}$ & $\mathbf{0.142\pm 0.002}$ & $\mathbf{0.145\pm 0.001}$ & $\mathbf{0.147\pm 0.001}$ \\
\bottomrule
\end{tabular}
\end{table*}

Denote the final output of the algorithms as $\hat{x}$. Then, we define {\it soft constrained reward} of the $\hat{x}$ as $\mu_1(\hat{x})$ if $\mu_2(\hat{x})\geq 0.9 \tau$; otherwise, it equals zero. We use this definition to tolerate slight violation of the constraint. 
In \Cref{fig:ct-recovery-row}, we report the average soft constrained reward as a function of the average budget per arm  with \(K=100\) prompts.
We normalize the value by $\mu_1({x}^*)$, where $x^*$ is the best feasible arm. 
We note that the proposed bandits based algorithms outperform the uniform baseline in almost all settings. 
The advantage is more significant when performing task CNN\_Dailymail on Llama3 (denoted as `CNN\_Dailymail - Llama3') and performing task XSUM on Gemma (denoted as `XSum - Gemma'). 
For both cases, when the average budget on each arm $b$ is sufficiently large, the proposed bandits based algorithms recover more than 80\% and 90\% of the utility of $x^*$ subject to the relaxed constraint, while the baseline only reaches 20\% to 50\% of that, respectively. 

In \Cref{tab:feasiR-xsum_main}, we present the soft constrained reward results by performing task XSum on Gemma. Across different prompt set sizes $K$, and per-arm budget $b$, both CSR and MLP-CSR consistently outperform the uniform pulling baseline. The uniform pulling baseline rarely finds a feasible near-optimal prompt, while our algorithms can reliably find a close-optimal prompt.
Notably, MLP-CSR yields slightly higher average soft rewards than CSR, demonstrating the advantage of shared structure in reward functions.

\subsection{Pareto Prompt Set Identification}
The proposed \textsc{GenPSI} framework for Pareto prompt set identification is also instantiated into two variants.

\textbf{EGE.}  We first treat prompts independently, and adopt Successive Rejection, uniform pulling and sample averaging as the Scheduler, Allocator and Estimator, respectively, under which \textsc{GenPSI} reduces to EGE. 

\textbf{MLP-EGE.} Following the same configuration of Scheduler and reward function as in MLP-CSR, we instantiate \textsc{GenPSI} to MLP-EGE.

To evaluate the performance of the algorithms on Pareto prompt set identification, we introduce a metric called hypervolume (HV)~\citep{knowles2003bounded}.
HV measures the Lebesgue volume of objective space dominated by the obtained Pareto set, which is a scaler indicator of the quality and diversity of trade-offs of the estimated Pareto set.  
HV is computed with respect to a reference point, which we set to the origin for both metrics. We also present the recovered HV as a percentage of the ground-truth Pareto-set HV, which is obtained by exhaustively evaluating all prompts in the candidate pool.

\begin{figure*}[t]
  \centering
  \includegraphics[width=\linewidth]{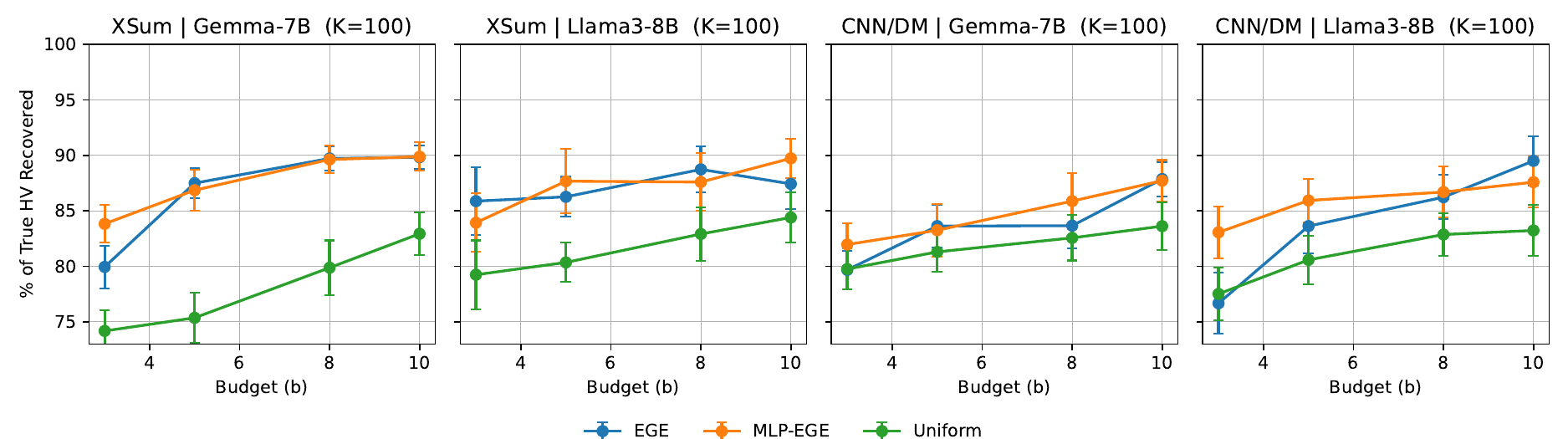}
  \vspace{-0.2in}
  \caption{{Hypervolume recovery vs.\ per-arm budget}. 
  Error bars denote 95\% confidence intervals over 20 random seeds.} 
  \label{fig:hv-recovery-row}
\end{figure*}

\begin{table*}[t] 
\centering
\vspace{-0.1in}
\caption{Hypervolume (HV) on CNN/DailyMail - Llamma 3.}
\label{tab:hv-xsum_main}
\vspace{-0.1in}
\setlength{\tabcolsep}{3pt}
\renewcommand{\arraystretch}{1.05}
\begin{tabular}{c c c c c c}
\toprule
{$\mathbf{K}$} & \textbf{Method} & {$\mathbf{b=3}$} & {$\mathbf{b=5}$} & {$\mathbf{b=8}$} & {$\mathbf{b=10}$} \\
\midrule
~ & Uniform                      & $0.1577\pm0.0039$ & $0.1534\pm0.0051$ & $0.1685\pm0.0028$ & $0.1661\pm0.0037$ \\
\rowcolor{RowGray}\cellcolor{white} 30 & EGE      & $0.1603\pm0.0049$ & $0.1680\pm0.0028$ & $0.1753\pm0.0033$ & $\mathbf{0.1744\pm0.0043}$ \\
\rowcolor{RowGray}\cellcolor{white}  & MLP-EGE  & $\mathbf{0.1632\pm0.0034}$ & $\mathbf{0.1688\pm0.0039}$ & $\mathbf{0.1803\pm0.0022}$ & $0.1727\pm0.0030$ \\
\midrule
~ & Uniform                      & $0.1559\pm0.0029$ & $0.1543\pm0.0031$ & $0.1618\pm0.0023$ & $0.1646\pm0.0031$ \\
\rowcolor{RowGray}\cellcolor{white} 50 & EGE      & $\mathbf{0.1651\pm0.0023}$ & $0.1647\pm0.0022$ & $\mathbf{0.1706\pm0.0032}$ & $\mathbf{0.1720\pm0.0023}$ \\
\rowcolor{RowGray}\cellcolor{white}  & MLP-EGE  & $0.1618\pm0.0033$ & $\mathbf{0.1628\pm0.0027}$ & $0.1671\pm0.0028$ & $0.1673\pm0.0026$ \\
\midrule
~ & Uniform                     & $0.1519\pm0.0024$ & $0.1579\pm0.0022$ & $0.1624\pm0.0019$ & $0.1631\pm0.0023$ \\
\rowcolor{RowGray}\cellcolor{white} 100 & EGE     & $0.1503\pm0.0028$ & $0.1639\pm0.0024$ & $0.1690\pm0.0020$ & $\mathbf{0.1754\pm0.0022}$ \\
\rowcolor{RowGray}\cellcolor{white}  & MLP-EGE & $\mathbf{0.1628\pm0.0024}$ & $\mathbf{0.1684\pm0.0020}$ & $\mathbf{0.1699\pm0.0023}$ & $0.1716\pm0.0023$ \\
\bottomrule
\end{tabular}
\end{table*}

In \Cref{fig:hv-recovery-row}, we report the results when prompt set size $K=100$. We note that both bandits based algorithms consistently outperform the baseline. When per-arm budget $b$ is low, MLP-EGE achieves the highest performance on average, indicating the advantage of exploiting shared parameters under \textsc{GenPSI}.  
When per-arm budget $b=8, 10$, both elimination approaches recover about $90\%$ HV of the ground-truth Pareto set, whereas the baseline recovers around the low-to-mid $80\%$ range.

\Cref{tab:hv-xsum_main} presents the average hypervolume on XSum dataset with Gemma for varying $K$ and $b$. Bandits based algorithms consistently outperform the baseline, indicating robustness of the proposed \textsc{GenPSI} framework.
More comprehensive evaluation results can be found in \Cref{appx:experiment}.

\section{Conclusion}
In this work, we established a principled connection between multi-objective prompt selection and the framework of multi-objective pure-exploration bandits, representing (to the best of our knowledge) the first formalization of this problem in such a setting. Within this framework, we addressed two fundamental problems: best feasible prompt identification and Pareto prompt set identification. To this end, we proposed two general algorithms, \textsc{Gensec} and \textsc{GenPSI}, which were theoretically grounded and designed to exploit shared structures among prompts to improve sample efficiency under limited evaluation budgets. Extensive experiments on XSum and CNN/DailyMail demonstrated the effectiveness of our approach. Our framework opened new opportunities for principled and scalable prompt optimization under complex evaluation criteria. Future directions include extending our methods to more diverse tasks and models, incorporating richer evaluation metrics such as fairness and efficiency, and exploring strategies for real-world, large-scale deployment.

\bibliography{main}
\bibliographystyle{iclr2026_conference}

\newpage
\appendix
\section*{Appendix}

\section{Best feasible prompt identification}

\subsection{Settings and notations}
\label{apd:gap:constrained}
In this section, we formally present the Linear Constrained Sequential Halving in \Cref{alg:LCSR_formal}. 
We first recap the linear constrained bandit setting and notations.
For the multi-objective bandit setting, we have the prompt or arm set \(\mathcal{X}\) with \(|\mathcal{X}|=K\), and the expected performance or reward vector $\mu(x),x\in\mathcal{X}$.
For each arm, there is an associated feature $\phi(x)\in\mathbb{R}^d$.
In the linear constrained bandit setting, we assume the number of reward dimensions is only two for simplicity and the expected rewards satisfy the linear structure,
\begin{equation}
    \label{eqn:thm2:1}
    \mu_1(x)=\phi(x)^\top \theta^{(1)},\qquad 
    \mu_2(x)=\phi(x)^\top \theta^{(2)} ,
\end{equation}
where \(\theta^{(1)},\theta^{(2)}\in\mathbb{R}^d\) are unknown.
Moreover, for the theoretical results, we further assume that the one-time evaluation results satisfy,
\begin{equation}
    \label{eqn:thm2:2}
    f^{(t)}= (f^{(t)}_1, f^{(t)}_2) = (\mu_1(x^{(t)}), \mu_2(x^{(t)})) + (\eta^{(t)}_1, \eta^{(t)}_2),
\end{equation}
where $f^{(t)}$ is the evaluation reward of one pull of the arm $x^{(t)}$ and $\eta^{(t)}_1, \eta^{(t)}_2$ are the independent \(\sigma\)-sub-Gaussian noise.

\begin{definition}[\(\sigma\)-sub-Gaussian]\label{def:one-subgaussian}
A random variable \(X\) is \emph{\(1\)-sub-Gaussian} if for all \(\lambda \in \mathbb{R}\),
\[
\mathbb{E}\!\left[e^{\lambda\,(X - \mathbb{E}[X])}\right] \le \exp\!\left(\frac{\sigma^2\lambda^{2}}{2}\right).
\]
\end{definition}

\subsection{Constrained gap}
\label{sec:BFI:gap}
Given a constrained threshold $\tau$, the feasible set is denoted as
\(
\mathcal{F} := \{\, x \in \mathcal{X} : \mu_2(x) \ge \tau \} .
\)
The goal of the constrained bandit algorithm is to identify the optimal arm $x^\star$,
\begin{align*}
    x^\star := \arg\max_{x\in\mathcal{F}} \mu_1(x).
\end{align*}

With the notations, we follow \citet{faizal2022constrained} to define the violation gap and sub-optimal gap as follows
\[
\mathrm{viol}(x) := \max \{\tau-\mu_2(x),\,0 \}, 
\qquad 
\mathrm{subopt}(x) := \mu_1(x^\star) - \mu_1(x).
\]

They characterize the difficulty to distinguish a prompt as infeasible or suboptimal.
Then, the gap used to classify an arm as infeasible or suboptimal is denoted as
\begin{align*}
    \delta(x) = \max\{\mathrm{viol}(x),\ \mathrm{subopt}(x)\}.
\end{align*}
We note that for $x\neq x^*$, $\delta(x)>0$ while $\delta(x^*)=0$.

Finally, note that the probability of failing to identify the optimal arm also includes the case where the optimal arm is estimated as infeasible. Therefore, we define the constrained gap as
\begin{equation}
    \label{eqn:thm2:3}
    \Delta(x) = \min \{\delta(x), \mu_2(x^*) - \tau\}. 
\end{equation}
We also assume that the arms are indexed in ascending order of their constrained gaps, with the first arm corresponding to the optimal feasible arm, such that $\Delta(1)\le\Delta(2)\le\cdots\le\Delta(K)$.

\subsection{Algorithm Design}
We present the Linear Constrained Sequential Halving Algorithm for the linear reward setting in \Cref{alg:LCSR_formal}.
The algorithm is instantiated from \textsc{GenSec} \Cref{alg:LCSR} by using the Sequential Halving (SH) scheduler, the G-optimal design allocator and the linear estimator.

\begin{algorithm}[hbt]
\caption{\textsc{Linear Constrained Sequential Halving}}
\label{alg:LCSR_formal}
\begin{algorithmic}[1]
\State \textbf{Input:} budget \(B\), threshold \(\tau\), prompt set \(\mathcal{X}\), feature map \(\phi(\cdot)\), tolerance $\epsilon$, parameter $\kappa\in(0, 1/3]$
\State\textbf{Initialization:} \(A_0\gets \Xc\); \(X_0\gets\emptyset, Y_0\gets\emptyset\)
\State  \((R,\{n_r\}_{r=1}^R,\{l_r\}_{r=1}^R)\gets\textsc{Scheduler}(K,B)\):  
\[
\text{Sequential Halving}:\; R=\lceil \log_2 K \rceil,\quad l_{r}=\big\lceil \tfrac{K}{2^r}\big\rceil,\quad 
n_r=\big\lfloor \tfrac{B}{R}\big\rfloor
\]
\For{$r=1$ {\bf to} $R$}
    \State  Allocate $n_r$ pulls across active arm set $A_{r-1}$ based on G-optimal design~(\Cref{alg:G-optimal}):  $$(x^{(1)},\ldots,x^{(n_r)})\gets \textsc{G-optimal Design}(n_r, A_{r-1},\phi,\epsilon,\kappa)$$ 
    \State Pull the arms and collect the observations: 
    \[X_r \leftarrow \{\phi(x^{(t)})\}_t,\quad Y_r \in \mathbb{R}^{n_r} \leftarrow \{f^{(t)}\}_t\]
    \State Estimate $\mu$ based on $X_r,Y_r$:  $$\widehat{\mu}_r(x)\leftarrow \textsc{Linear Estimator}(x;\phi(\cdot),X_r,Y_r), \forall x\in A_{r-1}$$
  \State Construct the empirically feasible set and the empirically infeasible set:
    \begin{align*}
        & \widehat{\mathcal{F}}_r\gets \{x\in A_{r-1}:\widehat{\mu}_{2,r}(x)>\tau\}, \\
        & \widehat{\mathcal{F}}_r^c\gets \{x\in A_{r-1}:\widehat{\mu}_{2,r}(x)\le\tau\}
    \end{align*}
    \State Sort the arms in $\widehat{\mathcal{F}}_r$ in decreasing order of $\widehat{\mu}_{1,r}(x)$, followed by the arms in $\widehat{\mathcal{F}}_r^c$, ordered in decreasing $\widehat{\mu}_{2,r}(x)$
    \State Let $A_r$ consist the first $l_r$ arms in this ordering
\EndFor
\State \textbf{Output:} \(A_R\)
\end{algorithmic}
\end{algorithm}

\subsubsection{G-optimal Design}
\label{apd:G-optimal}

The G-optimal design follows \citet{kone2025bandit}, achieving the same estimation error bound as in \Cref{lem:1} for multi-objective bandits. It employs entropic mirror descent to solve the continuous relaxation of the G-optimal design problem, followed by an efficient rounding step to produce an approximate integer solution. These procedures are adapted from \citet{tao2018best} and \citet{allen2017near}, and are denoted by \textsc{MD} and \textsc{ROUND}, respectively.

The output $N_i$ of the rounding procedure indicates the number of pulls allocated for arm $i$. The corresponding budget allocation $\{x^{(t)}\}_{t=1}^{n_r}$ can then be constructed accordingly.
We present the subroutine below.

\begin{algorithm}[htb]
\caption{\textsc{G-optimal Design~\citep{kone2025bandit}}}
\label{alg:G-optimal}
\begin{algorithmic}[1]
\State \textbf{Input:} budget $N$, active set $A$, feature map $\phi(\cdot)$, tolerance $\epsilon$, parameter $\kappa\in(0, 1/3]$
\State $\{w_i\}_{i\in A}\leftarrow \textsc{MD}(A, \phi, \epsilon)$
\State $\{N_i\}_{i\in A}\leftarrow \textsc{ROUND}(N, \{w_i\}_{i\in A}, \kappa, A, \phi)$
\For{$t=1$ {\bf to} $N$}

\(x^{(t)}\leftarrow i \text{, if } t\in \left(\sum_{j=1}^{i-1} N_j,  \sum_{j=1}^i N_j\right]\)
\EndFor
\State  \textbf{Output:} \(\{x^{(t)}\}_t\)
\end{algorithmic}
\end{algorithm}

\subsubsection{Linear Estimator}

For the linear case, the estimator only relies on the data collected in the current round to estimate $\theta$.  
In round $r$, given the active arm set $A_{r-1}$, the arm pulled at step $t$ satisfies $x^{(t)}\in A_{r-1}$ and yields the observed outcome $f^{(t)}$.
We define $X_r$ as the design matrix obtained by stacking the feature vectors $\phi(x^{(t)})^\top$ from round $r$, and $Y_r$ as the vector of the corresponding outcomes $f^{(t)}$.
\[
X_r :=
\begin{bmatrix}
\phi(x^{(1)})^\top\\
\phi(x^{(2)})^\top\\
\vdots\\
\phi(x^{(n_r)})^\top
\end{bmatrix}\in\mathbb{R}^{n_r\times d},
\qquad
Y_r :=
\begin{bmatrix}
f^{(1)}\\
f^{(2)}\\
\vdots\\
f^{(n_r)}
\end{bmatrix}\in\mathbb{R}^{n_r}.
\]
In each round $r$, define the Gram matrix
\[
V_r := X_r^\top X_r \in \mathbb{R}^{d\times d}.
\]
Denoting $\Phi_r = [\,\phi(x_{r,1}),\ldots,\phi(x_{r,k_r})\,]\in\mathbb{R}^{d\times k_r}$ as a maximal linearly independent subset selected from $\{\phi(x)\}_{x\in A_r}$, we can define the pesudo inverse
\begin{equation}
\label{eqn:var}
V_r^\dagger \;:=\; \Phi_r\big(\Phi_r^\top V_r \Phi_r\big)^{-1}\Phi_r^\top
\quad\in\mathbb{R}^{d\times d}.
\end{equation}
Then, the estimation becomes
\begin{equation}
\label{eqn:lin_est}
\widehat{\theta}_r\;=\;V_r^\dagger\,X_r^\top Y_r,
\qquad
\widehat{\mu}_r(x)\;=\;\phi(x)^\top \widehat{\theta}_r.
\end{equation}

\subsection{Proof of \Cref{thm:LCSR_formal}}
Before we proceed to prove \Cref{thm:LCSR_formal}, we first introduce several key lemmas.

We first bound the estimation error in each round by adapting Lemma 2 of \citet{kone2025bandit}.

\begin{lemma}[Adapted from Lemma 2 in \citet{kone2025bandit}]
\label{lem:1}
Let $ n_r \geq 45d_{r-1} $, where $ d_r = \text{dim(span} \{ \phi(x) : x \in A_{r} \} ) $, $\theta\in \mathbb{R}^{d\times m}$. Then, under \Cref{alg:LCSR_formal}, for all $ \epsilon > 0 $ and $ x \in A_{r-1} $,
\[
\mathbb{P} \left( \left\| (\theta - \hat{\theta})^\top \phi(x) \right\|_\infty \geq \epsilon \right) \leq 4 \exp\left( - \frac{a n_r \epsilon^2}{d_r} \right),
\]
where \(a = \frac{1}{6\sigma^2}.\)
\end{lemma}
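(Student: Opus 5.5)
The plan is to fix a round $r$ and condition on the active set $A_{r-1}$ and on the allocation $(x^{(1)},\ldots,x^{(n_r)})$ returned by \Cref{alg:G-optimal}. Since the estimator in \eqref{eqn:lin_est} only uses data from the current round, and the allocation is a deterministic function of $A_{r-1}$, the noise $\eta^{(t)}$ in the current round is independent of the design.

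For each objective $i\in\{1,2\}$ and each $x\in A_{r-1}$, the error can then be written as
\[
\phi(x)^\top(\widehat{\theta}^{(i)}_r-\theta^{(i)})=\phi(x)^\top V_r^\dagger X_r^\top \eta_i .
\]
This uses the fact that $\phi(x)$ lies in the span of $\Phi_{r-1}$, so $\phi(x)^\top V_r^\dagger V_r\theta^{(i)}=\phi(x)^\top\theta^{(i)}$. The projection property of the pseudo-inverse \eqref{eqn:var} on that span is what makes this identity hold.

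The error is therefore a linear combination $\sum_t c_t\eta^{(t)}_i$ of independent $\sigma$-sub-Gaussian variables, with $\sum_t c_t^2=\phi(x)^\top V_r^\dagger\phi(x)=\|\phi(x)\|^2_{V_r^\dagger}$. A standard Chernoff bound gives
\[
\mathbb{P}\big(|\phi(x)^\top(\widehat{\theta}^{(i)}_r-\theta^{(i)})|\ge\epsilon\big)\le 2\exp\!\big(-\epsilon^2/(2\sigma^2\|\phi(x)\|^2_{V_r^\dagger})\big).
\]

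Next I control $\max_{x\in A_{r-1}}\|\phi(x)\|^2_{V_r^\dagger}$ through the G-optimal design. By the Kiefer--Wolfowitz theorem restricted to the $d_{r-1}$-dimensional span, the continuous optimum satisfies $\max_x\|\phi(x)\|^2_{V(w)^\dagger}=d_{r-1}$. Mirror descent with tolerance $\epsilon$ gets within a constant factor of this. The rounding procedure of \citet{allen2017near} with parameter $\kappa\le 1/3$ then guarantees $V_r\succeq(1-3\kappa)\,n_r V(w)$ on the span, provided $n_r\ge 5d_{r-1}/\kappa^2$. Choosing $\kappa=1/3$ is the reason for the assumption $n_r\ge 45d_{r-1}$.

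Combining these yields $\|\phi(x)\|^2_{V_r^\dagger}\le 3d_{r-1}/n_r$ for every active $x$. Plugging this into the tail bound gives $2\exp(-n_r\epsilon^2/(6\sigma^2 d_{r-1}))$ per coordinate. A union bound over the $m=2$ coordinates gives the factor $4$ and the constant $a=1/(6\sigma^2)$. Here the lemma's $d_r$ should be read as the dimension for the active set $A_{r-1}$, and $d_{r-1}\le d$ in any case.

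The main obstacle is the rounding step: showing that the integer allocation preserves the design quality with exactly the constant $3$, uniformly on the restricted span, when $V_r$ may be singular in $\mathbb{R}^d$. I would first handle this by working in coordinates $\Phi_{r-1}$, where the reduced Gram matrix is invertible. There I would invoke the guarantee of \citet{allen2017near} as used in Lemma 2 of \citet{kone2025bandit}, and then transfer the bound back through \eqref{eqn:var}.
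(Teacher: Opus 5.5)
Your skeleton matches the standard proof of Lemma 2 in \citet{kone2025bandit}, which is all the paper relies on (it gives no argument beyond the citation). That skeleton is: condition on $A_{r-1}$ so the current-round noise is independent of the deterministic allocation; use the projection property of $V_r^\dagger$ on $\mathrm{span}\{\phi(x):x\in A_{r-1}\}$ to write the error as $\phi(x)^\top V_r^\dagger X_r^\top\eta_i$ with variance proxy $\sigma^2\|\phi(x)\|^2_{V_r^\dagger}$; bound the leverage by Kiefer--Wolfowitz plus rounding; and union-bound over the two objectives. Your re-indexing is also correct. The design and $\Phi$ live on $A_{r-1}$, so what one can prove has $d_{r-1}$ in the exponent. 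The literal $d_r\le d_{r-1}$ is random and set by round $r$'s own elimination, so it would be a stronger and generally unjustified claim. Only $d_{r-1}\le d$ is used downstream.

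The genuine gap is the step that fixes the constants. You state the rounding guarantee as $V_r\succeq(1-3\kappa)n_rV(w)$ and then take $\kappa=1/3$ to match $n_r\ge 45d_{r-1}=5d_{r-1}/\kappa^2$. At $\kappa=1/3$ this says $V_r\succeq 0$, which gives no control on $\|\phi(x)\|_{V_r^\dagger}$, so $\|\phi(x)\|^2_{V_r^\dagger}\le 3d_{r-1}/n_r$ does not follow. Similarly, if mirror descent is only within a constant factor $c$ of the Kiefer--Wolfowitz value, that $c$ multiplies the leverage bound, and you end with $a=1/(6c\sigma^2)$ rather than $1/(6\sigma^2)$.

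What you need is a rounding guarantee that loses exactly a factor $3$ at $\kappa=1/3$. This is the G-criterion form used by \citet{tao2018best} and \citet{kone2025bandit}: for $\kappa\in(0,1/3]$ and $n_r\ge 5d_{r-1}/\kappa^2$, $\max_{x\in A_{r-1}}\|\phi(x)\|^2_{V_r^\dagger}\le(1+6\kappa)\max_{x\in A_{r-1}}\|\phi(x)\|^2_{(n_rV(w))^\dagger}$. Combine it with $\max_x\|\phi(x)\|^2_{V(w)^\dagger}=d_{r-1}$ for the design $w$; otherwise the mirror-descent tolerance must be carried into $a$. This gives $3d_{r-1}/n_r$ and hence $a=1/(6\sigma^2)$. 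With that replacement, the rest of your argument goes through.
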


\Cref{lem:1} indicates that, by applying the G-optimal allocator and the linear estimator, the estimated rewards concentrate around the true performance mean. 

Next, we leverage \Cref{lem:1} to show \Cref{lem:2}, which bounds the probability that a sub-optimal prompt is {empirically} better than the best feasible arm. 

\begin{lemma}
    \label{lem:2}
    At one round $r$, consider arm $x$ remaining active in the arm set $A_{r-1}$, define event $G_r(x) = \{\text{arm $x$ ranked higher than arm $1$ at the end of round $r$}\}$. It can be obtained that
    \begin{align*}
        \mathbb{P}(G_r(x)) \leq 12 \exp\left( - \frac{a n_r \Delta(x)^2}{4d_r} \right).
    \end{align*}
\end{lemma}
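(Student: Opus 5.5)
We will prove \Cref{lem:2} by splitting on the types of arm $x$ and on the ways $x$ can be ranked above arm $1$ under the ordering of \Cref{alg:LCSR_formal}. Each case reduces to a deviation of size at least $\Delta(x)/2$ in one coordinate of $\widehat{\mu}_r(x)$ or $\widehat{\mu}_r(1)$. We then invoke \Cref{lem:1} with $\epsilon=\Delta(x)/2$. Each call costs at most $4\exp(-a n_r\Delta(x)^2/(4d_r))$, and a union over at most three such events gives the constant $12$.

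Arm $x$ can be ranked above arm $1$ in only two ways. (A) Arm $1$ is declared empirically infeasible, i.e. $\widehat{\mu}_{2,r}(1)\le\tau$. (B) Both arms are empirically feasible and $\widehat{\mu}_{1,r}(x)\ge \widehat{\mu}_{1,r}(1)$. The case where both are empirically infeasible and $x$ wins on $\widehat{\mu}_2$ is contained in (A). The case where $x$ is feasible and $1$ infeasible is also contained in (A).

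For (A), since $\mu_2(1)-\tau\ge \Delta(x)$ by the definition of $\Delta$, the event forces $\mu_2(1)-\widehat{\mu}_{2,r}(1)\ge\Delta(x)\ge\Delta(x)/2$. \Cref{lem:1} bounds its probability by $4\exp(-a n_r\Delta(x)^2/(4d_r))$.

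For (B), we use $\Delta(x)\le\delta(x)=\max\{\mathrm{viol}(x),\mathrm{subopt}(x)\}$ and split into two sub-cases.
\begin{itemize}[leftmargin=*]\itemsep=0pt
\item If $\mathrm{subopt}(x)\ge\Delta(x)$, then $\widehat{\mu}_{1,r}(x)\ge\widehat{\mu}_{1,r}(1)$ together with $\mu_1(1)-\mu_1(x)\ge\Delta(x)$ forces either $\widehat{\mu}_{1,r}(x)-\mu_1(x)\ge\Delta(x)/2$ or $\mu_1(1)-\widehat{\mu}_{1,r}(1)\ge\Delta(x)/2$. This gives two applications of \Cref{lem:1}.
\item Otherwise $\mathrm{viol}(x)=\tau-\mu_2(x)\ge\Delta(x)$. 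Then $x$ being empirically feasible requires $\widehat{\mu}_{2,r}(x)-\mu_2(x)>\Delta(x)\ge\Delta(x)/2$, which is one application.
\end{itemize}
The three arm types named in the proof sketch (feasible suboptimal, deceiver, infeasible suboptimal) fall into one of these two sub-cases. Deceivers necessarily land in the second, since $\mathrm{subopt}(x)<0$ for them.

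Altogether, $G_r(x)$ is contained in the union of at most three deviation events: $\{|\widehat{\mu}_{2,r}(1)-\mu_2(1)|\ge\Delta/2\}$, $\{|\widehat{\mu}_{1,r}(\cdot)-\mu_1(\cdot)|\ge\Delta/2\}$ for $x$ and for $1$, and $\{|\widehat{\mu}_{2,r}(x)-\mu_2(x)|\ge\Delta/2\}$. These can be grouped as one $\|\cdot\|_\infty$ event for arm $1$ and one for arm $x$, or counted as three events, for a total of at most $12\exp(-a n_r\Delta(x)^2/(4d_r))$.

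The main obstacle is bookkeeping rather than analysis. We must check that every ordering configuration is covered: both arms empirically infeasible, ties at $\tau$, and the strict versus non-strict inequalities. We must also confirm that $\Delta(x)$ simultaneously lower-bounds the relevant gap in each branch; this is exactly what the $\min$ with $\mu_2(x^*)-\tau$ and the $\max$ inside $\delta$ guarantee. A minor technical point is that \Cref{lem:1} is stated with $d_r$ while its hypothesis involves $d_{r-1}$. We will use it as stated, noting that the exponent uses the dimension of the active span, and assume $n_r\ge 45d_{r-1}$, which the budget condition $B\ge 45d\lceil\log_2K\rceil$ ensures.
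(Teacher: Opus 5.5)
Your proposal is correct and follows essentially the same route as the paper. Your event (A) is the paper's $F_r^c(1)$ term, bounded by $4\exp(\cdot)$, and your event (B) is its $G_r(x)\cap F_r(1)$ term, bounded by $8\exp(\cdot)$ through the same halved-gap applications of \Cref{lem:1}, so the constants combine as $4+8=12$. The differences are minor: you split (B) by whether $\mathrm{subopt}(x)$ or $\mathrm{viol}(x)$ is at least $\Delta(x)$ rather than by the paper's three arm types, which handles boundary cases such as $\mu_2(x)=\tau$ more cleanly, and your observation that grouping into two $\|\cdot\|_\infty$ events would even yield the constant $8$ is also valid.
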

\begin{proof}
    Let $F_r(x)$ denote the event that arm $x$ is empirically feasible at the end of round $r$, i.e., $\widehat{\mu}_{2, r}(x) \ge \tau$, and $F^c_r(x)$ be its complement. It can be first obtained that
    \begin{align*}
        \mathbb{P}(G_r(x))&= \mathbb{P}(G_r(x) \cap F^c_r(1)) + \mathbb{P}(G_r(x) \cap F_r(1))\\
        &\leq \mathbb{P}(F^c_r(1)) + \mathbb{P}(G_r(x) \cap F_r(1)).
    \end{align*}
    In the following, we bound the two terms of $\mathbb{P}(F^c_r(1))$ and $\mathbb{P}(G_r(x) \cap F_r(1))$ respectively.

    First, for the term $\mathbb{P}(F^c_r(1))$,  with Lemma~\ref{lem:1}, it holds that
    \begin{align*}
        \mathbb{P}(F^c_r(1)) \leq 4 \exp\left( - \frac{a n_r (\mu_{2}(1) - \tau)^2}{d_r} \right).
    \end{align*}

    Then, for the term $\mathbb{P}(G_r(x) \cap F_r(1))$, we do a case-by-case analysis based on the property of arm $x$. It is easy to see that arm $x$ must fall into one of the three cases.
    
    \textbf{Case 1. $\mu_1(x) < \mu_1(1)$ and $\mu_2(x) > \tau$.} In this case, it holds that $G_r(x) \cap F_r(1) \subseteq \{\widehat{\mu}_{1,r}(x) > \widehat{\mu}_{1,r}(1)\}$, which implies that
    \begin{align*}
        \mathbb{P}(G_r(x) \cap F_r(1)) &\leq \mathbb{P}(\widehat{\mu}_{1,r}(x) > \widehat{\mu}_{1,r}(1))\\
        & \leq \mathbb{P}\left(\widehat{\mu}_{1,r}(x) - \mu_1(x) \geq \frac{\mu_1(1) - \mu_1(x)}{2}\right) \\
        &\quad + \mathbb{P}\left(\mu_1(1) - \widehat{\mu}_{1,r}(1) \geq \frac{\mu_1(1) - \mu_1(x)}{2}\right)\\
        & \leq 8 \exp\left( - \frac{a n_r (\mu_1(1) - \mu_1(x))^2}{4d_r} \right),
    \end{align*}
    where the last inequality is from Lemma~\ref{lem:1}.

    \textbf{Case 2. $\mu_1(x) > \mu_1(1)$ and $\mu_2(x) \leq \tau$. } In this case, it holds that $G_r(x) \cap F_r(1) \subseteq \{\widehat{\mu}_{2,r}(x) > \tau\}$, which leads to
    \begin{align*}
        \mathbb{P}(G_r(x) \cap F_r(1)) &\leq \mathbb{P}(\widehat{\mu}_{2,r}(x) > \tau)\leq 4 \exp\left( - \frac{a n_r (\tau - \mu_2(x))^2}{d_r} \right),
    \end{align*}
    where the last inequality is from Lemma~\ref{lem:1}.

    \textbf{Case 3. $\mu_1(x) < \mu_1(1)$ and $\mu_2(x) \leq \tau$. } In this case, it holds that $G_r(x) \cap F_r(1) \subseteq \{\widehat{\mu}_{2,r}(x) > \tau\} \cap \{\widehat{\mu}_{1,r}(x) > \widehat{\mu}_{1,r}(1)\}$, which leads to
    \begin{align*}
        \mathbb{P}(G_r(x) \cap F_r(1)) &\leq \mathbb{P}(\widehat{\mu}_{2,r}(x) > \tau, \widehat{\mu}_{1,r}(x) > \widehat{\mu}_{1,r}(1))\\
        & \leq \min\left\{\mathbb{P}(\widehat{\mu}_{2,r}(x) > \tau), \mathbb{P}(\widehat{\mu}_{1,r}(x) > \widehat{\mu}_{1,r}(1))\right\}\\
        & \leq \min\left\{8 \exp\left( - \frac{a n_r (\mu_1(1) - \mu_1(x))^2}{4d_r} \right), 4 \exp\left( - \frac{a n_r (\tau - \mu_2(x))^2}{d_r} \right)\right\}\\
        & \leq 8 \exp\left( - \frac{a n_r (\max\{\mu_1(1) - \mu_1(x), \tau - \mu_2(x)\})^2}{4d_r} \right)
    \end{align*}
    where the third inequality can be obtained similarly as Case 1 and Case 2 following Lemma~\ref{lem:1}.

    Combining the three cases, regardless of the property of arm $x$, it holds that
    \begin{align*}
         \mathbb{P}(G_r(x) \cap F_r(1)) \leq 8 \exp\left( - \frac{a n_r (\max\{\mu_1(1) - \mu_1(x), \tau - \mu_2(x)\})^2}{4d_r} \right).
    \end{align*}

    The proof can then be concluded by adding the terms $\mathbb{P}(F^c_r(1))$ and $\mathbb{P}(G_r(x) \cap F_r(1))$ together as
    \begin{align*}
        \mathbb{P}(G_r(x))&\leq \mathbb{P}(F^c_r(1)) + \mathbb{P}(G_r(x) \cap F_r(1))\\
        & \leq 4 \exp\left( - \frac{a n_r (\mu_{2}(1) - \tau)^2}{d_r} \right) + 8 \exp\left( - \frac{a n_r (\max\{\mu_1(1) - \mu_1(x), \tau - \mu_2(x)\})^2}{4d_r} \right)\\
        & \leq 12 \exp\left( - \frac{a n_r \Delta(x)^2}{4d_r} \right),
    \end{align*}
    where the last step comes from the definition of $\Delta(x)$.
\end{proof}
\begin{lemma}
\label{lem:3}
The probability that arm $1$ is eliminated on round $r$ satisfies
\begin{align*}
    \mathbb{P}(1 \notin A_{r} | 1 \in A_{r-1}) \le 48 \exp\left( - \frac{a n_r \min_{i\neq 1}\Delta(x)^2}{4d_r} \right).
\end{align*}
\end{lemma}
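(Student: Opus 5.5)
This follows the standard Sequential Halving argument of \citet{karnin2013almost}, with \Cref{lem:2} playing the role of the single-objective pairwise comparison bound. Condition on $1\in A_{r-1}$. Arm $1$ is eliminated at the end of round $r$ only if at least $l_r$ arms of $A_{r-1}\setminus\{1\}$ are ranked above it. Let $|A_{r-1}| = l_{r-1}\approx K/2^{r-1}$. Then $l_r=\lceil l_{r-1}/2\rceil$, so elimination forces roughly half of the active arms to beat arm $1$.

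\textbf{Step 1 (restrict to the easiest arms).} Let $N_r=\sum_{x\in A_{r-1}\setminus\{1\}} \mathbf{1}\{G_r(x)\}$. Then $\{1\notin A_r\}\subseteq\{N_r\ge l_r\}$. In Karnin et al.'s argument one first discards the $l_{r-1}/4$ arms with the smallest gaps. Here the target bound only involves $\min_{i\neq 1}\Delta(x)^2$, so no discarding is needed: every arm satisfies $\Delta(x)\ge \Delta_{\min}:=\min_{i\neq1}\Delta(i)$. \Cref{lem:2} then gives $\mathbb{E}[N_r]\le 12\,(l_{r-1}-1)\exp(-a n_r\Delta_{\min}^2/(4d_r))$.

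\textbf{Step 2 (Markov's inequality).} Markov gives
\[
\mathbb{P}(N_r\ge l_r)\le \frac{\mathbb{E}[N_r]}{l_r}\le \frac{12(l_{r-1}-1)}{l_r}\exp\Big(-\frac{a n_r\Delta_{\min}^2}{4d_r}\Big).
\]
Since $l_r=\lceil l_{r-1}/2\rceil\ge l_{r-1}/2$, the ratio $(l_{r-1}-1)/l_r$ is at most $2$. This already gives the constant $24\le 48$, so the factor $48$ leaves slack for rounding. That slack covers the case $l_{r-1}$ odd, and the possibility that ranks are counted with ties, where ``ranked higher'' uses a non-strict comparison.

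\textbf{Main obstacle.} The delicate point is that the events $G_r(x)$ are dependent. They share the same estimator $\widehat{\theta}_r$, and they are coupled through the common feasibility event $F_r(1)$. So one cannot use independence or a Chernoff bound on $N_r$. Markov's inequality sidesteps this because it only needs linearity of expectation.

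One must also check that \Cref{lem:2} applies conditionally on $A_{r-1}$. The round-$r$ estimator uses only fresh round-$r$ samples, and the design depends only on $A_{r-1}$. So conditional on the history, \Cref{lem:1}, and hence \Cref{lem:2}, holds with $d_r$ replaced by the dimension of the active span; this requires $n_r\ge 45 d_{r-1}$, which follows from $B\ge 45d\lceil\log_2K\rceil$.

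If a sharper gap dependence were desired, I would instead sum over the top $l_{r-1}/4$ gaps as in Karnin et al. For the stated lemma, however, the minimal-gap version suffices.
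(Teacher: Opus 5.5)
Your proposal is correct and follows the paper's proof: you count the arms ranked above arm $1$ as $N_r$, bound $\mathbb{E}[N_r]$ with \Cref{lem:2} using the minimal gap, and apply Markov's inequality. Your bound $(l_{r-1}-1)/l_r\le 2$ is tighter than the paper's implicit $l_{r-1}/l_r\le 4$, giving $24$ rather than $48$; your remark that the slack ``covers ties'' is unnecessary, since ties would affect \Cref{lem:2} rather than this ratio.
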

\begin{proof}
    Denote $N_r$ as the number of arms ranked higher than arm $1$ at the end of round $r$, i.e.,
    \begin{align*}
        N_r = \sum_{x\in A_{r-1}, x \neq 1} \mathbb{I}\{G_r(x)\},
    \end{align*}
    where we reused the definition of $G_r(x)$ in Lemma~\ref{lem:2}. For arm $1$ to be eliminated at the end of round $r$, the following needs to happen:
    \begin{align*}
        N_r \geq l_r,
    \end{align*}
    i.e., there are at least $l_r$ arms ranked higher than arm $1$. We can bound the probability of this event as
    \begin{align*}
        \mathbb{P}(N_r \geq l_r) &\leq \frac{\mathbb{E}[N_r]}{l_r}\\
        &= \frac{1}{l_r} \sum_{x\in A_{r-1}, x \neq 1} \mathbb{P}(G_r(x))\\
        & \leq \frac{1}{l_r} \sum_{x\in A_{r-1}, x \neq 1} 12 \exp\left( - \frac{a n_r \Delta(x)^2}{4d_r} \right)\\
        & \leq \frac{12 l_{r-1}}{l_{r}} \cdot \exp\left( - \frac{a n_r \min_{x\neq 1}\Delta(x)^2}{4d_r} \right)\\
        & \leq 48\exp\left( - \frac{a n_r \min_{x\neq 1}\Delta(x)^2}{4d_r} \right)
    \end{align*}
    where the first inequality follows the Markov inequality and the second inequality leverages Lemma~\ref{lem:2}. The proof is then concluded.
\end{proof}

\begin{theorem}
\label{thm:LCSR_formal}
Given a fixed set of $K$ arms $\mathcal{X}$ with the linear reward structure specified in  \Cref{eqn:thm2:1,eqn:thm2:2},
under total budget $B \geq 45 d \left\lceil \log_2 K \right\rceil$, the probability that \Cref{alg:LCSR_formal} fails to output the best feasible arm is upper bounded by
\[
\mathbb{P}( 1 \notin A_R) \le 48 \left\lceil \log_2 K \right\rceil \exp\left( -\frac{a}{4 } \cdot \left\lfloor\frac{B}{\left\lceil \log_2 K \right\rceil}\right\rfloor \cdot \frac{1}{dH}\right),
\]
where 
\[
a = \frac{1}{6\sigma^2}, \, 
H = \max_{x \in \mathcal{X}\setminus\{1\}}\frac{1}{\Delta^2(x)},
\]  
and $\Delta(x)$ is defined in \Cref{eqn:thm2:3}.
\end{theorem}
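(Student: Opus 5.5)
The plan is to chain \Cref{lem:3} across the $R=\lceil\log_2 K\rceil$ rounds of Sequential Halving with a union bound, and then make the per-round exponent uniform in $r$.

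\textbf{Step 1 (decomposition).} The event $\{1\notin A_R\}$ is the disjoint union over $r=1,\dots,R$ of the events that arm $1$ is in $A_{r-1}$ but not in $A_r$. Since $1\in A_0=\mathcal{X}$, this gives
\[
\mathbb{P}(1\notin A_R)\le \sum_{r=1}^R \mathbb{P}(1\notin A_r \mid 1\in A_{r-1}).
\]

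\textbf{Step 2 (hypotheses of the lemmas).} Each conditional probability will be bounded by \Cref{lem:3}. That lemma rests on \Cref{lem:1}, which needs $n_r\ge 45 d_{r-1}$. Since $d_{r-1}\le d$ and $n_r=\lfloor B/R\rfloor$, the assumption $B\ge 45dR$ gives $B/R\ge 45d$. As $45d$ is an integer, $\lfloor B/R\rfloor\ge 45d$ as well, so the hypothesis holds in every round. Conditioning on $1\in A_{r-1}$ is harmless: round $r$ uses only fresh samples, and $A_{r-1}$ is determined by earlier rounds, so \Cref{lem:1} applies conditionally.

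\textbf{Step 3 (uniform exponent).} For every $r$ we have $d_r\le d$ and $n_r=\lfloor B/R\rfloor$. The minimum gap satisfies $\min_{x\neq 1}\Delta(x)^2 = 1/H$. The exponent in \Cref{lem:3} is therefore at least $\frac{a}{4}\lfloor B/R\rfloor\frac{1}{dH}$, and each summand is at most $48\exp\bigl(-\frac{a}{4}\lfloor B/R\rfloor \frac{1}{dH}\bigr)$. Summing $R=\lceil\log_2K\rceil$ identical terms gives exactly the stated bound.

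\textbf{Main obstacle.} The delicate point sits inside \Cref{lem:3}: the constant $48$ relies on the ratio $l_{r-1}/l_r\le 4$, and on the minimum over active arms being at least the global minimum gap, which is true because restricting the set only increases the minimum. For Sequential Halving with ceilings, $l_{r-1}=\lceil K/2^{r-1}\rceil\le 2\lceil K/2^r\rceil$, so the ratio is comfortably within $4$. I would also double-check that the scheduler's index convention, $l_0=K$, is consistent with this.

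A second subtlety is that \Cref{lem:1} uses $d_r$ while the G-optimal design acts on $A_{r-1}$. I would simply upper bound every such dimension by $d$, which removes the ambiguity.

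The rest is routine bookkeeping.
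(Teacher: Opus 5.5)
Your proposal is correct and follows the paper's proof essentially line for line: you decompose over rounds, bound $\mathbb{P}(1\notin A_r, 1\in A_{r-1})\le \mathbb{P}(1\notin A_r\mid 1\in A_{r-1})$, apply \Cref{lem:3} in each round, and make the exponent uniform via $n_r=\lfloor B/R\rfloor$, $d_r\le d$, and $\min_{x\neq 1}\Delta(x)^2=1/H$. Your explicit checks that $\lfloor B/R\rfloor\ge 45d\ge 45d_{r-1}$, that \Cref{lem:1} applies conditionally on $A_{r-1}$, and that $l_{r-1}/l_r\le 2$ under Sequential Halving fill in hypotheses the paper leaves implicit.
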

\begin{proof}
    With Lemma \ref{lem:3}, the proof of the theorem can be obtained as follows:
    \begin{align*}
        \mathbb{P}(1 \notin A_R) & = \sum_{r = 1}^R  \mathbb{P}(1 \notin A_{r}, 1 \in A_{r-1})\\
        & \leq \sum_{r = 1}^R  \mathbb{P}(1 \notin A_{r} | 1 \in A_{r-1})\\
        & \leq \sum_{r = 1}^R 48 \exp\left( - \frac{a n_r \min_{x\neq 1}\Delta(x)^2}{4d_r} \right)\\
        & \leq  48 \left\lceil \log_2 K \right\rceil \exp\left( -\frac{a}{4 } \cdot \left\lfloor\frac{B}{\left\lceil \log_2 K \right\rceil}\right\rfloor \cdot \frac{1}{dH}\right),
    \end{align*}
    which concludes the proof.
\end{proof}

\section{Pareto set identification}
\label{apd:PSI}

\subsection{Restate the \textsc{GenPSI}}
\label{apd:gap:pareto}
For the \textsc{GenPSI} framework, the main difference from \textsc{GenSEC} is the design of gaps. In \textsc{GenPSI}, we use the Pareto gap, following \citet{auer2016pareto} and \citet{kone2024bandit}. To define the gap, we first introduce the following definitions.

\begin{definition}[Pareto Dominance]
We say that arm $x$ Pareto-dominates arm $y$ (denoted as $\mu(x) \succ \mu(y)$) if and only if
\[
 \forall i, \; \mu_{i}(x) \ge \mu_{i}(y) 
\;\text{and}\;
\exists i, \; \mu_{i}(x) > \mu_{i} (y).
\]
\end{definition}

\begin{definition}[Pareto set]
 For a given set of arms $\Xc$, the Pareto set is defined as   \[
\mathcal{X}^\star \;:=\; \left\{\, x \in \mathcal{X} \;\Big|\; \nexists\, x' \in \mathcal{X} \text{ such that }  \mu(x')\succ \mu(x)\right\},
\]i.e., the Pareto set consists of all arms that are not dominated by any other arm.
\end{definition}

Then, we define $m(x,y)$ and $M(x,y)$ for any prompt pair $x,y\in\mathcal{X}$ as follows:
\[
m(x,y) := \min_{i\in[m]}\bigl(\mu_i(y)-\mu_i(x)\bigr),
\qquad
M(x,y) := \max_{i\in[m]}\bigl(\mu_i(x)-\mu_i(y)\bigr).
\]
These two terms quantify the level of $y$ dominating $x$. 

If $x\notin \mathcal{X}^\star$, the Pareto gap is directly
\[
\Delta(x) = \max_{y\in \mathcal{X}^\star} m(x,y),
\]
which quantifies the greatest dominance from a Pareto prompt. 

If $x\notin \mathcal{X}^\star$, we further denote
\begin{align*}
\delta^{+}(x) \;:=\; \min_{\,y\in \mathcal{X}^\star\setminus\{x\}}\;\min\!\bigl(M(x,y),\,M(y,x)\bigr),
\qquad
\delta^{-}(x) \;:=\; \min_{\,y\notin \mathcal{X}^\star}\;\Bigl(\max\!\bigl(M(y,x),0\bigr) + \Delta(y)\Bigr), 
\end{align*}
and define the Pareto gaps for $x\in\mathcal{X}^\star$
\[
\Delta(x) =
\min\bigl\{\delta^{+}(x),\,\delta^{-}(x)\bigr\}.
\]

In \Cref{alg:genpsi}, we use the empirical estimates of the Pareto gaps for arm elimination.

\begin{algorithm}[t]
\caption{GENeralized Pareto Set Identification \textsc{(Genpsi)}}
\label{alg:genpsi}
\begin{algorithmic}[1]
\State \textbf{Input:} budget \(B\), prompt set \(\Xc\), feature map \(\phi(\cdot)\), dataset \(\mathcal{D}\)
\State \textbf{Initialization:} \(A_0\gets \Xc\); \((R,\{n_r\}_{r=1}^R,\{l_r\}_{r=1}^R)\gets \textsc{Scheduler}(K,B)\); \(X_0\gets\emptyset, Y_0\gets\emptyset\)
\For{$r=1:R$}
  \State $(x^{(1)},\ldots,x^{(n_r)})\gets \textsc{Allocator}(n_r, A_{r-1})$ 
  \State  At each step $t \in \{1, ..., n_r\}$, pull arm $x^{(t)}$ with feature $\phi(x^{(t)}$, observe evaluation $f^{(t)}$, and update observations as $X_r \leftarrow X_{r-1}\cup\{\phi(x^{(t)})\}, Y_r \leftarrow Y_{r-1}\cup\{f^{(t)}\}$
  \State Obtain estimator $\widehat{\mu}_r(x)\leftarrow \textsc{Estimator}(x;X_r,Y_r)$, $\forall x\in A_{r-1}$
  \State Calculate the empirical dominating set 
  \[
  \hat{\mathcal{X}}_r^\star := \Bigl\{ x \in \hat{\mathcal{X}} \Big| \nexists\, x' \in \mathcal{X}\ \text{such that} 
  \text{ such that }  \hat{\mu}_r(x')\succ \hat{\mu}_r(x)  \Bigr\}
  \]
  \State Estimate the empirical Pareto gap  
  \[
  \widehat{\Delta}_r(x) =
  \begin{cases}
  \displaystyle \max_{\,y\in \widehat{\mathcal{X}}^\star} \widehat{m}(x,y), & x\notin \widehat{\mathcal{X}}_{r}^\star,\\[6pt]
  \displaystyle \min\bigl\{\widehat{\delta}^{+}(x), \widehat{\delta}^{-}(x)\bigr\}, & x\in \widehat{\mathcal{X}}_r^\star 
  \end{cases}
  \]
  where the $\widehat{m},\widehat{\delta}^+,\widehat{\delta}^-$ are all the empirical version based on $\widehat{\mu}$
  \State Perform arm elimination: $A_r \gets \textsc{Eliminator}\big(A_{r-1},\, l_r,\, \widehat{\Delta}_r(\cdot)\big)$
\EndFor
\State \textbf{Output:} \(A_R\)
\end{algorithmic}
\end{algorithm}

\subsection{Extension to general $m\geq 2$}
Our framework extends naturally to settings with more than two objectives. The key quantities underlying our algorithms, such as the Pareto gaps and constrained gaps, can be generalized to $m$-dimensional objective spaces without changing the fundamental structure of the elimination or selection rules. The computational overhead associated with these extensions is modest: dominance checks and gap computations scale polynomially with $m$ and remain efficient for the small number of objectives typically encountered in practice. Thus, the proposed algorithms retain both conceptual simplicity and computational tractability when applied to $m > 2$ multi-objective prompt selection problems.

\section{Experiments details and results}

\subsection{Models and template}

We uses two target models Gemma-7b-it~\citep{team2024gemma} and Llama3-8b-instruct~\citep{dubey2024llama}. For the instruction models, we adopt the recommended system template as \Cref{fig:llama3-template}.

\begin{figure}[htb]
\centering
\small
\begin{tcolorbox}[
  enhanced jigsaw,,
  colback=white,
  colframe=black!25,
  rounded corners,
  boxrule=0.5pt,
  width=0.98\linewidth,
  fonttitle=\bfseries\color{teal!60!black},
  title={System instruction template},
  attach boxed title to top left={yshift=-2mm,xshift=2mm},
  boxed title style={colback=orange!25,colframe=orange!60!black,rounded corners},
  before upper={\par\medskip},
]
\ttfamily
\verb!<|begin_of_text|>!\verb!<|start_header_id|>!\texttt{system}\verb!<|end_header_id|>!\\
\\
You are an intelligent assistant. Please finish the given task,\\
answer with the output only and reply nothing else.\\
\\
\verb!<|eot_id|>!\verb!<|start_header_id|>!\texttt{user}\verb!<|end_header_id|>!\\
\\
\texttt{<prompt>}\\
\\
\verb!<|eot_id|>!\verb!<|start_header_id|>!\texttt{assistant}\verb!<|end_header_id|>!
\end{tcolorbox}
\caption{The system prompt template is used for both Gemma and Llama3.}
\label{fig:llama3-template}
\end{figure}

In addition, to generate the prompt sets, we use the generation template as \Cref{fig:forward-template}.

\begin{figure}[htb]
\centering
\small
\begin{tcolorbox}[
  enhanced jigsaw,
  colback=white,
  colframe=black!25,
  rounded corners,
  boxrule=0.5pt,
  width=0.96\linewidth,
  fonttitle=\bfseries\color{teal!60!black},
  title={Prompt Generation Template},
  attach boxed title to top left={yshift=-2mm,xshift=2mm},
  boxed title style={colback=orange!25,colframe=orange!60!black,rounded corners},
  before upper={\par\smallskip} 
]
\ttfamily
Input: [Input1] \\
Output: [Output1] \\[6pt]

Input: [Input2] \\
Output: [Output2] \\[3pt]

$\dots$ \\[3pt]

Please provide the instruction now.
\end{tcolorbox}
\caption{Prompt generation with 3-5 examples.}
\label{fig:forward-template}
\end{figure}

\subsection{Metrics used in Summarization Tasks}
\label{append:rewards}

In our experiments, we mainly consider two metrics for the summarization task, the ROUGE and Brevity score.

\paragraph{ROUGE Score.} 
The ROUGE~\citep{lin2004rouge} score is a widely used metric for the summarization task. It captures the token-wise similarity between two texts. We use its variant, ROUGE-Lsum, provided by the Hugging Face {\bf evaluate} python package. In general, it computes the longest common subsequences between the generated text and the reference, and aggregates the results across sentences. It captures the similarity between the generated summary from the LLM and the gold reference from the dataset.

\paragraph{Brevity Score.}
The Brevity score function is an artificial score to map the token lengths into the range $[0, 1]$. 
It is a piecewise function
\[
f_{B}(x)=
\begin{cases}
1, & x \le \tau_{\text{low}},\\[4pt]
\dfrac{\tau_{\text{high}}-x}{\tau_{\text{high}}-\tau_{\text{low}}}, & \tau_{\text{low}}<x<\tau_{\text{high}},\\[8pt]
0, & x \ge \tau_{\text{high}}~.
\end{cases}
\]
where $x$ is the token length and $\tau_{\text{low}}<\tau_{\text{high}}$. 

{
\color{blue}

}

\subsection{Implementation Details with General Reward Structure} 
\label{sec:General_reward}
We now specify the algorithm for general reward functions. We set the \textsc{Scheduler} to be Sequential Halving. A uniform budget \textsc{Allocator} is adopted in each round for ease of implementation. 

For the \textsc{Estimator}, a neural network with parameter $\theta$ is used to approximate the reward function $g_\theta$. In each round $r$, we use the observations collected from the pulled arms to optimize $\theta$. Specifically, we let
\begin{align}
\mathcal{L}_r(\theta;\lambda)=\tfrac{1}{n_r}\!\sum_{t=1}^{n_r}\!\|g_\theta(\phi^{(t)})-f^{(t)}\|_2^2+\lambda\|\theta\|_2^2.
\end{align}
where the $\ell_2$ regularization is adopted to prevent model overfitting. 

Then, the estimator obtains the estimates according to
\begin{align*}
    \hat{\theta}_r=\arg\min_\theta \mathcal{L}_r(\theta;\lambda), \quad \widehat{\mu}_r(x)=g_{\hat{\theta}_r}(\phi(x)),\quad \forall x\in A_r.
\end{align*}

\subsection{Experimental Results}
\label{appx:experiment}
\subsubsection{Best Feasible Prompt Identification} 

We list our main results of best feasible prompt identification in \Cref{tab:feasiR-xsum} and \Cref{tab:feasiR-cnndm}, which are separately for datasets XSum and CNN/DailyMail.
As defined in the main paper, $K$ denotes the size of the candidate prompts, and $b$ denotes the budget per arm. We report the averaged soft constrained reward that {\it soft constrained reward} defined as $\mu_1(\hat{x})$ if $\mu_2(\hat{x})\geq 0.9 \tau$, and zero otherwise.

In \Cref{tab:feasiR-xsum,tab:feasiR-cnndm}, CSR and MLP-CSR consistently outperform uniform evaluation across budgets, candidate-set sizes $K$, models, and datasets, highlighting the benefit of adaptive allocation under constraints. 
As expected, increasing the per-arm budget $b$ generally improves the performance of all methods.
The effect of increasing $K$ is mixed. On one hand, a larger candidate pool makes identification more challenging under a fixed budget. On the other hand, it increases the likelihood of containing higher-quality feasible prompts, so the soft-constrained reward does not exhibit a clear monotonic trend with respect to $K$. 
Finally, uniform evaluation may fail to return any feasible prompt in more challenging regimes, whereas CSR and MLP-CSR remain robust, consistently recovering feasible prompts with high reward.

\begin{table*}[htb]
\centering
\caption{Average soft constrained reward on \textbf{XSum}. }
\label{tab:feasiR-xsum}
\setlength{\tabcolsep}{6pt}
\renewcommand{\arraystretch}{1.05}
\begin{tabular}{c c c c c c}
\toprule
$\mathbf{K}$ & \textbf{Method} & $\mathbf{b=3}$ & $\mathbf{b=5}$ & $\mathbf{b=8}$ & $\mathbf{b=10}$ \\
\midrule
\multicolumn{6}{c}{\emph{Gemma-7B}} \\
\midrule
~ & Uniform                       & $0.015\pm 0.010$ & $0.000\pm 0.000$ & $0.030\pm 0.013$ & $0.037\pm 0.014$ \\
\rowcolor{RowGray}\cellcolor{white} 30 & CSR      & $0.117\pm 0.013$ & $0.137\pm 0.007$ & $\mathbf{0.144\pm 0.000}$ & $\mathbf{0.143\pm 0.000}$ \\
\rowcolor{RowGray}\cellcolor{white}  & MLP-CSR  & $\mathbf{0.123\pm 0.010}$ & $\mathbf{0.139\pm 0.002}$ & $0.140\pm 0.002$ & $0.142\pm 0.001$ \\
\midrule
~ & Uniform                       & $0.021\pm 0.011$ & $0.021\pm 0.011$ & $0.037\pm 0.014$ & $0.036\pm 0.014$ \\
\rowcolor{RowGray}\cellcolor{white} 50 & CSR      & $0.122\pm 0.012$ & $\mathbf{0.143\pm 0.001}$ & $0.141\pm 0.002$ & $0.140\pm 0.002$ \\
\rowcolor{RowGray}\cellcolor{white}  & MLP-CSR  & $\mathbf{0.143\pm 0.002}$ & $0.142\pm 0.002$ & $\mathbf{0.147\pm 0.001}$ & $\mathbf{0.142\pm 0.002}$ \\
\midrule
~ & Uniform                      & $0.021\pm 0.011$ & $0.066\pm 0.016$ & $0.037\pm 0.014$ & $0.044\pm 0.015$ \\
\rowcolor{RowGray}\cellcolor{white} 100 & CSR     & $0.134\pm 0.007$ & $0.141\pm 0.002$ & $0.142\pm 0.001$ & $0.143\pm 0.002$ \\
\rowcolor{RowGray}\cellcolor{white}  & MLP-CSR & $\mathbf{0.139\pm 0.002}$ & $\mathbf{0.142\pm 0.002}$ & $\mathbf{0.145\pm 0.001}$ & $\mathbf{0.147\pm 0.001}$ \\
\midrule
\multicolumn{6}{c}{\emph{Llama3-8B}} \\
\midrule
~ & Uniform                       & $0.048\pm 0.016$ & $0.069\pm 0.019$ & $0.080\pm 0.020$ & $0.089\pm 0.020$ \\
\rowcolor{RowGray}\cellcolor{white} 30 & CSR      & $\mathbf{0.161\pm 0.003}$ & $\mathbf{0.154\pm 0.004}$ & $\mathbf{0.148\pm 0.002}$ & $\mathbf{0.149\pm 0.003}$ \\
\rowcolor{RowGray}\cellcolor{white}  & MLP-CSR  & $0.143\pm 0.003$ & $0.126\pm 0.010$ & $0.147\pm 0.004$ & $\mathbf{0.149\pm 0.002}$ \\
\midrule
~ & Uniform                       & $0.063\pm 0.018$ & $0.045\pm 0.017$ & $0.051\pm 0.018$ & $0.078\pm 0.019$ \\
\rowcolor{RowGray}\cellcolor{white} 50 & CSR      & $\mathbf{0.144\pm 0.011}$ & $0.134\pm 0.010$ & $0.122\pm 0.014$ & $\mathbf{0.157\pm 0.002}$ \\
\rowcolor{RowGray}\cellcolor{white}  & MLP-CSR  & $0.141\pm 0.008$ & $\mathbf{0.151\pm 0.003}$ & $\mathbf{0.154\pm 0.003}$ & $0.156\pm 0.002$ \\
\midrule
~ & Uniform                      & $0.113\pm 0.015$ & $0.134\pm 0.010$ & $0.123\pm 0.012$ & $0.141\pm 0.009$ \\
\rowcolor{RowGray}\cellcolor{white} 100 & CSR     & $\mathbf{0.141\pm 0.003}$ & $\mathbf{0.144\pm 0.008}$ & $0.153\pm 0.002$ & $0.154\pm 0.002$ \\
\rowcolor{RowGray}\cellcolor{white} & MLP-CSR & $0.124\pm 0.001$ & $\mathbf{0.144\pm 0.008}$ & $\mathbf{0.158\pm 0.003}$ & $\mathbf{0.157\pm 0.002}$ \\
\bottomrule
\end{tabular}
\end{table*}

\begin{table*}[htb]
\centering
\caption{Average soft constrained reward on \textbf{CNN/DailyMail}. }
\label{tab:feasiR-cnndm}
\setlength{\tabcolsep}{6pt}
\renewcommand{\arraystretch}{1.05}
\begin{tabular}{c c c c c c}
\toprule
$\mathbf{K}$ & \textbf{Method} & $\mathbf{b=3}$ & $\mathbf{b=5}$ & $\mathbf{b=8}$ & $\mathbf{b=10}$ \\
\midrule
\multicolumn{6}{c}{\emph{Gemma-7B}} \\
\midrule
~ & Uniform                       & $0.021\pm 0.014$ & $0.032\pm 0.017$ & $0.021\pm 0.014$ & $0.021\pm 0.014$ \\
\rowcolor{RowGray}\cellcolor{white} 30 & CSR      & $0.081\pm 0.018$ & $0.134\pm 0.020$ & $0.153\pm 0.012$ & $\mathbf{0.166\pm 0.010}$ \\
\rowcolor{RowGray}\cellcolor{white}  & MLP-CSR  & $\mathbf{0.138\pm 0.016}$ & $\mathbf{0.164\pm 0.011}$ & $\mathbf{0.164\pm 0.005}$ & $0.163\pm 0.005$ \\
\midrule
~ & Uniform                       & $0.007\pm 0.007$ & $0.020\pm 0.013$ & $0.020\pm 0.013$ & $0.000\pm 0.000$ \\
\rowcolor{RowGray}\cellcolor{white} 50 & CSR      & $0.091\pm 0.019$ & $0.161\pm 0.013$ & $0.161\pm 0.010$ & $\mathbf{0.167\pm 0.011}$ \\
\rowcolor{RowGray}\cellcolor{white}  & MLP-CSR  & $\mathbf{0.146\pm 0.015}$ & $\mathbf{0.162\pm 0.008}$ & $\mathbf{0.172\pm 0.007}$ & $0.153\pm 0.006$ \\
\midrule
~ & Uniform                      & $0.015\pm 0.015$ & $\mathbf{0.045\pm 0.022}$ & $0.000\pm 0.000$ & $0.015\pm 0.015$ \\
\rowcolor{RowGray}\cellcolor{white} 100 & CSR     & $\mathbf{0.088\pm 0.023}$ & $0.015\pm 0.015$ & $\mathbf{0.062\pm 0.024}$ & $0.032\pm 0.021$ \\
\rowcolor{RowGray}\cellcolor{white}  & MLP-CSR & $0.032\pm 0.020$ & $\mathbf{0.045\pm 0.027}$ & $0.032\pm 0.020$ & $\mathbf{0.072\pm 0.026}$ \\
\midrule
\multicolumn{6}{c}{\emph{Llama3-8B}} \\
\midrule
~ & Uniform                       & $0.000\pm 0.000$ & $0.000\pm 0.000$ & $0.008\pm 0.008$ & $0.000\pm 0.000$ \\
\rowcolor{RowGray}\cellcolor{white} 30 & CSR      & $\mathbf{0.157\pm 0.002}$ & $\mathbf{0.136\pm 0.015}$ & $\mathbf{0.148\pm 0.011}$ & $\mathbf{0.154\pm 0.009}$ \\
\rowcolor{RowGray}\cellcolor{white}  & MLP-CSR  & $0.142\pm 0.011$ & $0.126\pm 0.015$ & $0.142\pm 0.011$ & $\mathbf{0.154\pm 0.004}$ \\
\midrule
~ & Uniform                       & $0.015\pm 0.010$ & $0.016\pm 0.011$ & $0.008\pm 0.008$ & $0.010\pm 0.010$ \\
\rowcolor{RowGray}\cellcolor{white} 50 & CSR      & $\mathbf{0.154\pm 0.000}$ & $\mathbf{0.157\pm 0.009}$ & $0.155\pm 0.009$ & $\mathbf{0.165\pm 0.003}$ \\
\rowcolor{RowGray}\cellcolor{white}  & MLP-CSR  & $0.144\pm 0.005$ & $0.155\pm 0.004$ & $\mathbf{0.160\pm 0.003}$ & $0.160\pm 0.004$ \\
\midrule
~ & Uniform                      & $0.019\pm 0.013$ & $0.044\pm 0.017$ & $0.018\pm 0.012$ & $0.018\pm 0.012$ \\
\rowcolor{RowGray}\cellcolor{white} 100 & CSR     & $0.123\pm 0.014$ & $0.138\pm 0.011$ & $0.151\pm 0.008$ & $\mathbf{0.160\pm 0.003}$ \\
\rowcolor{RowGray}\cellcolor{white}  & MLP-CSR & $\mathbf{0.134\pm 0.011}$ & $\mathbf{0.154\pm 0.003}$ & $\mathbf{0.154\pm 0.003}$ & $\mathbf{0.160\pm 0.003}$ \\
\bottomrule
\end{tabular}
\end{table*}

\subsubsection{Pareto Prompt Set Identification}
Here, we list our main results for Pareto prompt set identification in \Cref{tab:hv-xsum} and \Cref{tab:hv-cnndm}, corresponding to the datasets XSum and CNN/DailyMail, respectively.

From \Cref{tab:hv-xsum} and \Cref{tab:hv-cnndm}, EGE and especially MLP-EGE generally achieve higher hypervolumn (HV) than uniform evaluation across budgets $b$, prompt pool size $K$, model, and datasets. Especially, MLP-CSR usually outperforms CSR when the prompt size is large and the budget is limited.
As expected, increasing the budget would generally improve all methods, since the estimates would be more accurate. 
The effect of increasing $K$ is also mixed, since larger $K$ makes identification harder, but it can also contain better trade-offs. Generally, elimination-based methods remain robust across settings.

\begin{table*}[htb]
\centering
\caption{Hypervolume (HV) on \textbf{XSum}.}
\label{tab:hv-xsum}
\setlength{\tabcolsep}{6pt}
\renewcommand{\arraystretch}{1.05}
\begin{tabular}{c c c c c c}
\toprule
$\mathbf{K}$ & \textbf{Method} & $\mathbf{b=3}$ & $\mathbf{b=5}$ & $\mathbf{b=8}$ & $\mathbf{b=10}$ \\
\midrule
\multicolumn{6}{c}{\emph{Gemma-7B}} \\
\midrule
~ & Uniform                      & $0.1105\pm0.0019$ & $0.1127\pm0.0021$ & $0.1165\pm0.0013$ & $0.1173\pm0.0012$ \\
\rowcolor{RowGray}\cellcolor{white} 30 & EGE      & $0.1115\pm0.0020$ & $\mathbf{0.1172\pm0.0014}$ & $\mathbf{0.1188\pm0.0015}$ & $0.1193\pm0.0013$ \\
\rowcolor{RowGray}\cellcolor{white}  & MLP-EGE  & $\mathbf{0.1147\pm0.0021}$ & $0.1138\pm0.0023$ & $0.1172\pm0.0018$ & $\mathbf{0.1195\pm0.0015}$ \\
\midrule
~ & Uniform                      & $0.1105\pm0.0023$ & $0.1169\pm0.0019$ & $0.1181\pm0.0017$ & $0.1192\pm0.0014$ \\
\rowcolor{RowGray}\cellcolor{white} 50 & EGE      & $\mathbf{0.1126\pm0.0028}$ & $\mathbf{0.1180\pm0.0018}$ & $\mathbf{0.1207\pm0.0019}$ & $\mathbf{0.1212\pm0.0016}$ \\
\rowcolor{RowGray}\cellcolor{white}  & MLP-EGE  & $0.1117\pm0.0024$ & $0.1152\pm0.0025$ & $0.1174\pm0.0018$ & $0.1187\pm0.0012$ \\
\midrule
~ & Uniform                     & $0.1007\pm0.0013$ & $0.1023\pm0.0016$ & $0.1084\pm0.0017$ & $0.1126\pm0.0014$ \\
\rowcolor{RowGray}\cellcolor{white} 100 & EGE     & $0.1085\pm0.0013$ & $\mathbf{0.1187\pm0.0010}$ & $\mathbf{0.1218\pm0.0007}$ & $0.1219\pm0.0007$ \\
\rowcolor{RowGray}\cellcolor{white}  & MLP-EGE & $\mathbf{0.1138\pm0.0012}$ & $0.1179\pm0.0013$ & $0.1216\pm0.0009$ & $\mathbf{0.1220\pm0.0009}$ \\
\midrule
\multicolumn{6}{c}{\emph{Llama3-8B}} \\
\midrule
~ & Uniform                      & $0.1433\pm0.0026$ & $0.1481\pm0.0023$ & $0.1472\pm0.0031$ & $0.1493\pm0.0031$ \\
\rowcolor{RowGray}\cellcolor{white} 30 & EGE      & $\mathbf{0.1614\pm0.0008}$ & $\mathbf{0.1496\pm0.0022}$ & $0.1535\pm0.0031$ & $\mathbf{0.1560\pm0.0015}$ \\
\rowcolor{RowGray}\cellcolor{white}  & MLP-EGE  & $0.1488\pm0.0026$ & $\mathbf{0.1496\pm0.0031}$ & $\mathbf{0.1577\pm0.0019}$ & $0.1548\pm0.0017$ \\
\midrule
~ & Uniform                      & $0.1500\pm0.0022$ & $0.1557\pm0.0024$ & $0.1586\pm0.0020$ & $0.1585\pm0.0021$ \\
\rowcolor{RowGray}\cellcolor{white} 50 & EGE      & $\mathbf{0.1626\pm0.0004}$ & $\mathbf{0.1587\pm0.0023}$ & $\mathbf{0.1596\pm0.0019}$ & $0.1604\pm0.0020$ \\
\rowcolor{RowGray}\cellcolor{white}  & MLP-EGE  & $0.1520\pm0.0020$ & $0.1552\pm0.0022$ & $0.1595\pm0.0013$ & $\mathbf{0.1616\pm0.0021}$ \\
\midrule
~ & 
Uniform                     & $0.1423\pm0.0028$ & $0.1443\pm0.0016$ & $0.1489\pm0.0022$ & $0.1516\pm0.0021$ \\
\rowcolor{RowGray}\cellcolor{white} 100 & EGE     & $\mathbf{0.1542\pm0.0028}$ & $0.1549\pm0.0016$ & $\mathbf{0.1594\pm0.0019}$ & $0.1570\pm0.0021$ \\
\rowcolor{RowGray}\cellcolor{white}  & MLP-EGE & $0.1507\pm0.0024$ & $\mathbf{0.1574\pm0.0027}$ & $0.1573\pm0.0024$ & $\mathbf{0.1611\pm0.0016}$ \\
\bottomrule
\end{tabular}
\end{table*}

\begin{table*}[htb]
\centering
\caption{Hypervolume (HV) on \textbf{CNN/DailyMail}.}
\label{tab:hv-cnndm}
\setlength{\tabcolsep}{6pt}
\renewcommand{\arraystretch}{1.05}
\begin{tabular}{c c c c c c}
\toprule
$\mathbf{K}$ & \textbf{Method} & $\mathbf{b=3}$ & $\mathbf{b=5}$ & $\mathbf{b=8}$ & $\mathbf{b=10}$ \\
\midrule
\multicolumn{6}{c}{\emph{Gemma-7B}} \\
\midrule
~ & Uniform                      & $0.1345\pm0.0036$ & $0.1393\pm0.0029$ & $0.1484\pm0.0019$ & $0.1459\pm0.0021$ \\
\rowcolor{RowGray}\cellcolor{white} 30 & EGE      & $0.1327\pm0.0038$ & $0.1458\pm0.0026$ & $0.1472\pm0.0023$ & $0.1453\pm0.0036$ \\
\rowcolor{RowGray}\cellcolor{white}  & MLP-EGE  & $\mathbf{0.1441\pm0.0029}$ & $\mathbf{0.1499\pm0.0019}$ & $\mathbf{0.1505\pm0.0016}$ & $\mathbf{0.1505\pm0.0019}$ \\
\midrule
~ & Uniform                      & $0.1310\pm0.0038$ & $0.1390\pm0.0023$ & $0.1414\pm0.0021$ & $0.1429\pm0.0020$ \\
\rowcolor{RowGray}\cellcolor{white} 50 & EGE      & $0.1334\pm0.0032$ & $0.1413\pm0.0020$ & $\mathbf{0.1455\pm0.0024}$ & $0.1445\pm0.0019$ \\
\rowcolor{RowGray}\cellcolor{white}  & MLP-EGE  & $\mathbf{0.1358\pm0.0032}$ & $\mathbf{0.1430\pm0.0017}$ & $0.1434\pm0.0020$ & $\mathbf{0.1463\pm0.0016}$ \\
\midrule
~ & Uniform                     & $0.1280\pm0.0015$ & $0.1305\pm0.0015$ & $0.1325\pm0.0017$ & $0.1342\pm0.0018$ \\
\rowcolor{RowGray}\cellcolor{white} 100 & EGE     & $0.1279\pm0.0014$ & $0.1342\pm0.0016$ & $0.1343\pm0.0016$ & $\mathbf{0.1411\pm0.0013}$ \\
\rowcolor{RowGray}\cellcolor{white}  & MLP-EGE & $\mathbf{0.1316\pm0.0016}$ & $0.1336\pm0.0019$ & $\mathbf{0.1379\pm0.0021}$ & $0.1408\pm0.0015$ \\
\midrule
\multicolumn{6}{c}{\emph{Llama3-8B}} \\
\midrule
~ & Uniform                      & $0.1577\pm0.0039$ & $0.1534\pm0.0051$ & $0.1685\pm0.0028$ & $0.1661\pm0.0037$ \\
\rowcolor{RowGray}\cellcolor{white} 30 & EGE      & $0.1603\pm0.0049$ & $0.1680\pm0.0028$ & $0.1753\pm0.0033$ & $\mathbf{0.1744\pm0.0043}$ \\
\rowcolor{RowGray}\cellcolor{white}  & MLP-EGE  & $\mathbf{0.1632\pm0.0034}$ & $\mathbf{0.1688\pm0.0039}$ & $\mathbf{0.1803\pm0.0022}$ & $0.1727\pm0.0030$ \\
\midrule
~ & Uniform                      & $0.1559\pm0.0029$ & $0.1543\pm0.0031$ & $0.1618\pm0.0023$ & $0.1646\pm0.0031$ \\
\rowcolor{RowGray}\cellcolor{white} 50 & EGE      & $\mathbf{0.1651\pm0.0023}$ & $0.1647\pm0.0022$ & $\mathbf{0.1706\pm0.0032}$ & $\mathbf{0.1720\pm0.0023}$ \\
\rowcolor{RowGray}\cellcolor{white}  & MLP-EGE  & $0.1618\pm0.0033$ & $\mathbf{0.1628\pm0.0027}$ & $0.1671\pm0.0028$ & $0.1673\pm0.0026$ \\
\midrule
~ & Uniform                     & $0.1519\pm0.0024$ & $0.1579\pm0.0022$ & $0.1624\pm0.0019$ & $0.1631\pm0.0023$ \\
\rowcolor{RowGray}\cellcolor{white} 100 & EGE     & $0.1503\pm0.0028$ & $0.1639\pm0.0024$ & $0.1690\pm0.0020$ & $\mathbf{0.1754\pm0.0022}$ \\
\rowcolor{RowGray}\cellcolor{white}  & MLP-EGE & $\mathbf{0.1628\pm0.0024}$ & $\mathbf{0.1684\pm0.0020}$ & $\mathbf{0.1699\pm0.0023}$ & $0.1716\pm0.0023$ \\
\bottomrule
\end{tabular}
\end{table*}

\subsection{Ablation Studies}
\label{append:ablation}

\subsubsection{Distribution of prompt objectives}
We visualize the objective distributions of the generated prompts to highlight the trade-off between the two metrics and to validate the reasonableness of our prompt-generation process and experimental design.

\begin{figure}[htbp]
    \centering
    \begin{subfigure}[t]{0.48\linewidth}
        \centering
        \includegraphics[width=\linewidth]{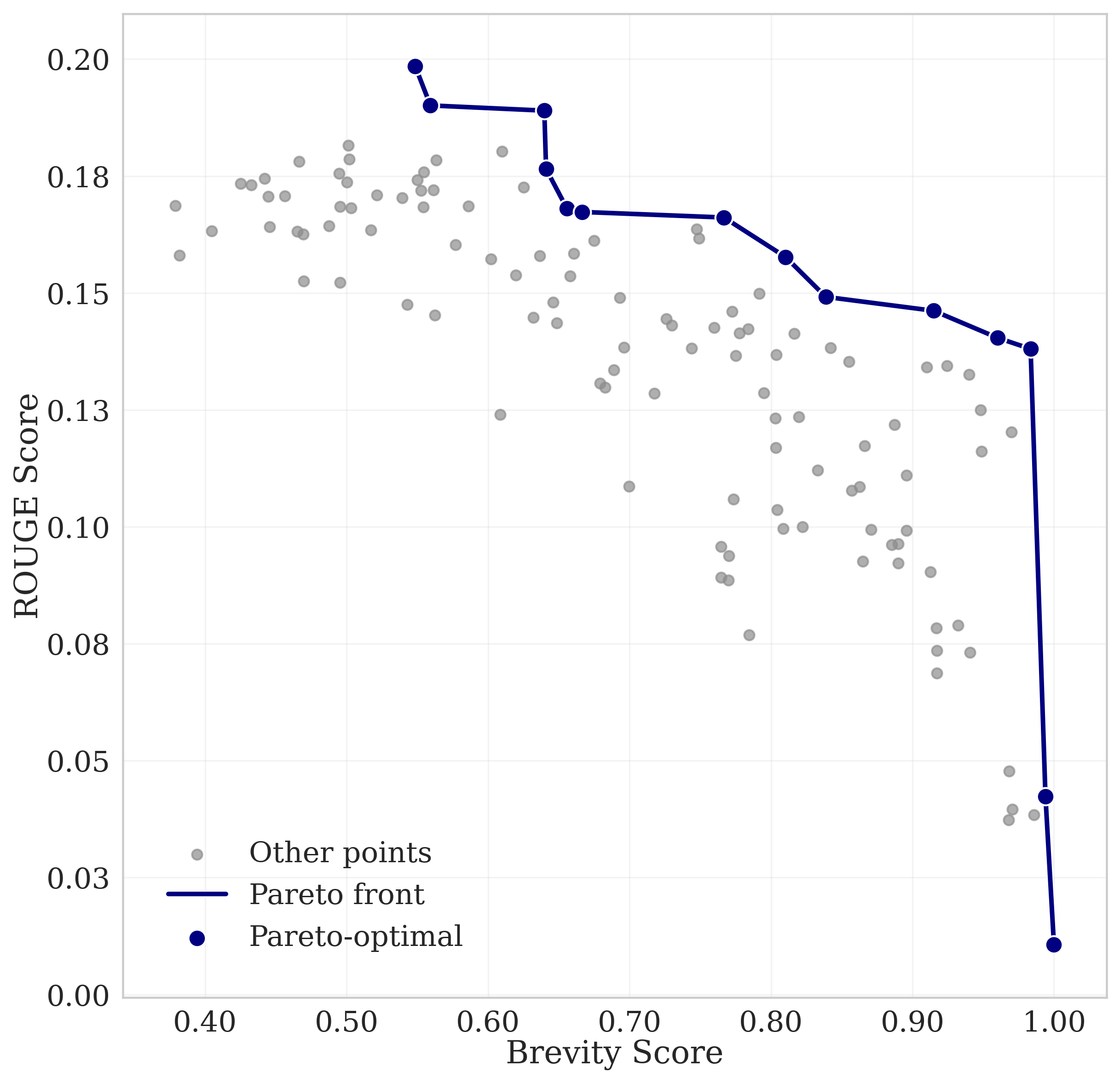}
        \caption{100 prompts}
        \label{fig:obj-dist-xsum}
    \end{subfigure}
    \hfill
    \begin{subfigure}[t]{0.48\linewidth}
        \centering
        \includegraphics[width=\linewidth]{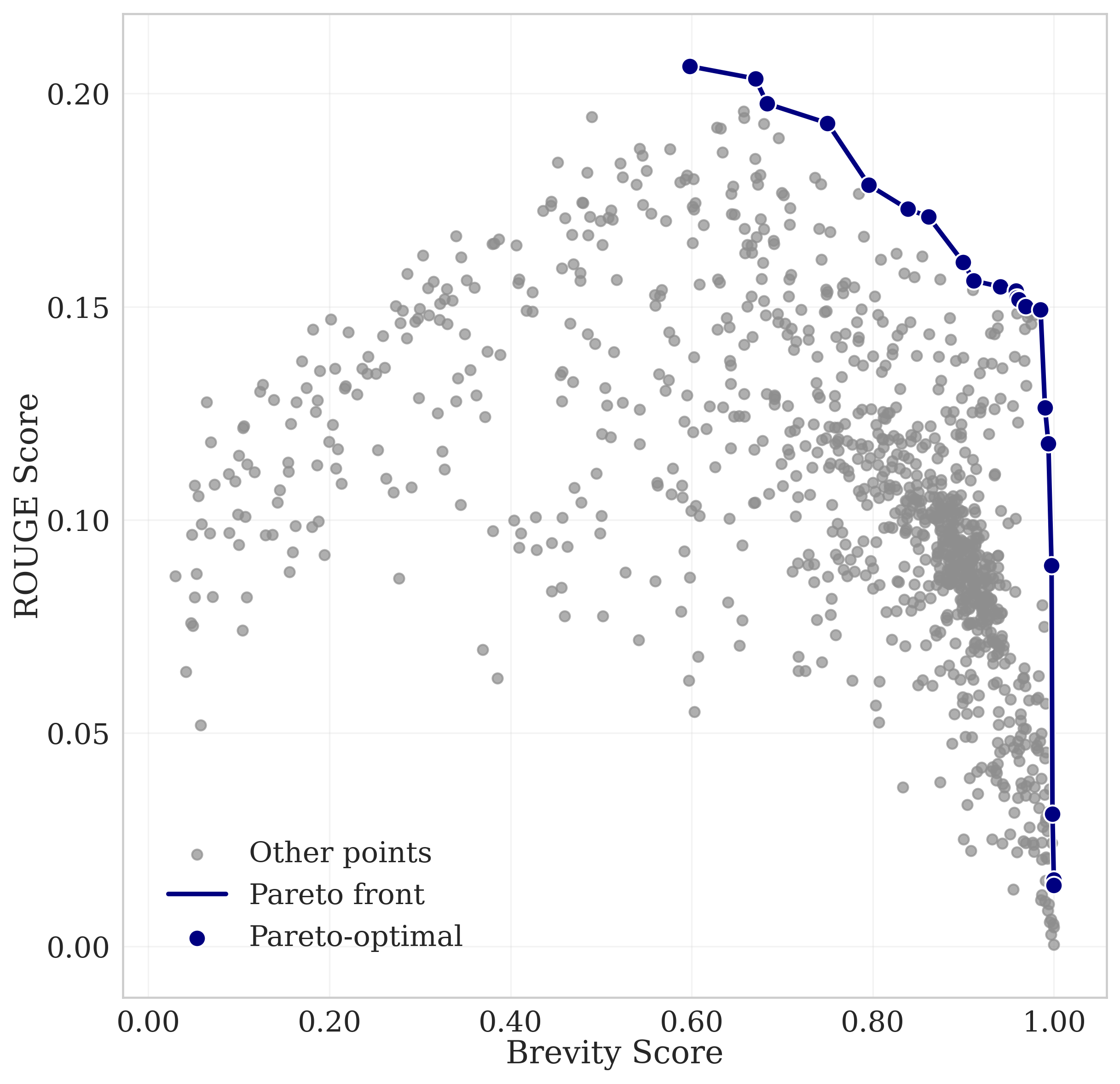}
        \caption{1000 prompts}
        \label{fig:obj-dist-xsum-1000}
    \end{subfigure}
    \caption{Distributions of the objectives for generated prompts on Xsum.}
    \label{fig:obj-dist}
\end{figure}

\Cref{fig:obj-dist-xsum} shows the objective distribution of the 100 generated prompts used in our main XSum experiments. The trade-off between the two metrics is clearly visible, demonstrating that identifying the Pareto-optimal prompts or the optimal feasible prompt is indeed a meaningful task in this environment.

We further extend the prompt set to 1000 candidates in \Cref{fig:obj-dist-xsum-1000}. For this larger pool, no manual filtering is applied. Comparing the two figures, we observe that manual filtering primarily removes prompts that perform poorly on both Brevity and ROUGE, while preserving the overall structure of the distribution. 
Moreover, the Pareto-front shapes in both figures are similar, indicating that manual filtering does not materially alter the key characteristics of the candidate prompt set.

\subsubsection{Varying Evaluation Budget}
We next analyze how performance changes as the evaluation budget per prompt-selection run increases.

\begin{figure}[htbp]
    \centering
    \includegraphics[width=0.45\textwidth]{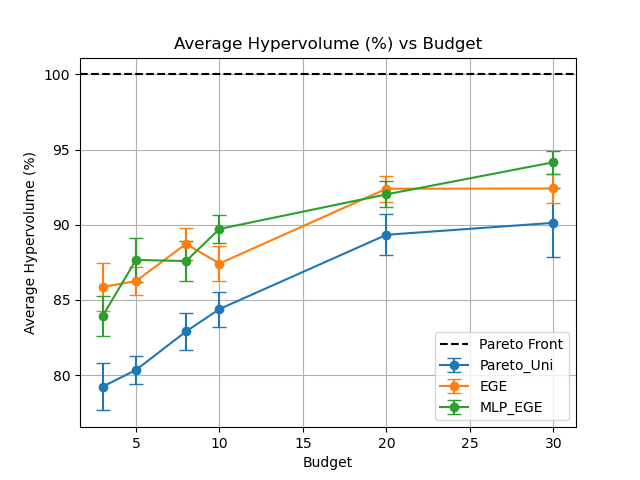}
    \includegraphics[width=0.45\textwidth]{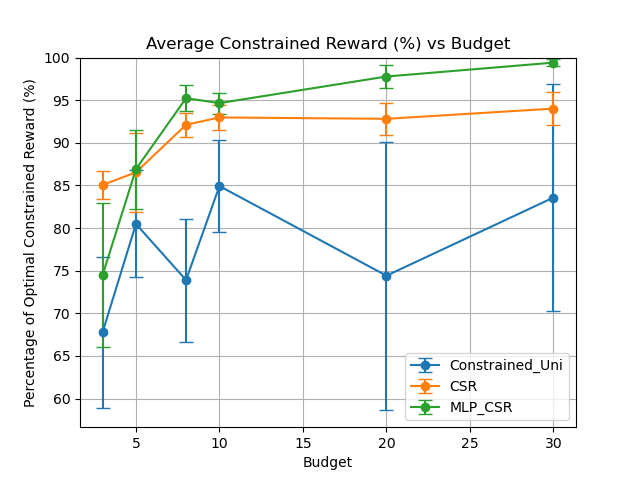}
    \caption{Average recovered hypervolume (\%) and constrained reward on XSum using Llama-3.}
    \label{fig:pareto_more_budget}
\end{figure}

\Cref{fig:pareto_more_budget} illustrates how the recovered hypervolume changes as we increase the evaluation budget for both our methods and the baselines. As expected, all algorithms benefit from larger budgets, but our methods consistently outperform the uniform-pulling baseline across all settings. This confirms that increasing the budget does not alter the overall conclusions. In the main experiments, we focus on smaller budgets in order to highlight that our algorithms remain effective even in the low-budget regime, which is the setting of primary interest.

\subsubsection{Experiments with an Expanded Candidate Prompt Pool}
We next examine how the algorithms behave when we increase the size of the candidate prompt pool.

\begin{figure}[htbp]
    \centering
    \includegraphics[width=0.45\textwidth]{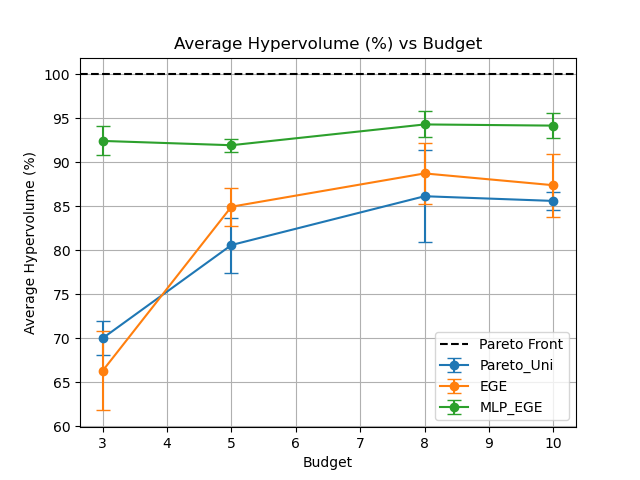}
    \includegraphics[width=0.45\textwidth]{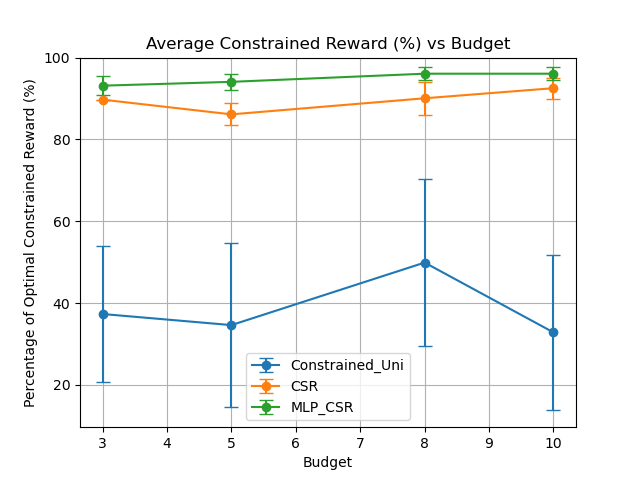}
    \caption{Average recovered hypervolume (\%) and average constrained reward (\%) with 1000 candidate prompts on XSum using Llama-3.}
\label{fig:pareto_more_prompts}
\end{figure}

\Cref{fig:pareto_more_prompts} reports the recovered hypervolume when scaling the candidate pool to 1000 prompts under different evaluation budgets. We observe that MLP-EGE performs best in this larger-pool setting, and EGE outperforms the uniform baseline when the budget is sufficiently large. These results indicate that our algorithms remain robust as the prompt pool grows substantially in size.

\subsubsection{Varying constraint}
Finally, we vary the constraint threshold and track the feasible average reward under different constraints in \Cref{fig:varying_constraint}. It indicates that our proposed algorithms consistently outperform the uniform baseline, and the choice of constraint does not compromise the effectiveness of our algorithms.

\begin{figure}[htbp]
    \centering
    \begin{subfigure}[t]{0.32\linewidth}
        \centering
        \includegraphics[width=\linewidth]{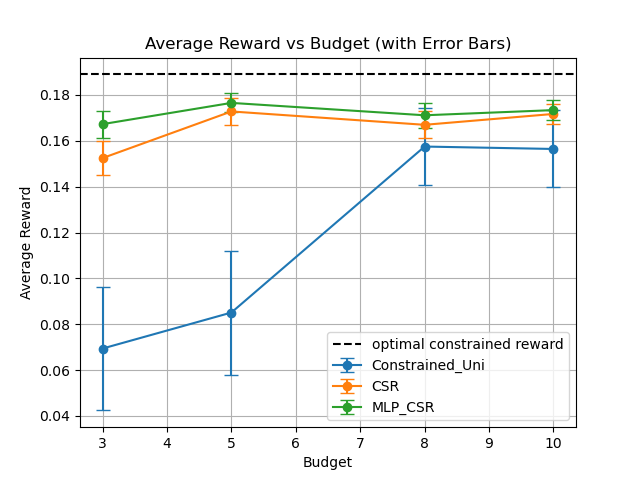}
        \caption{Constraint = 0.6}
        \label{fig:const_6}
    \end{subfigure}
    \hfill
    \begin{subfigure}[t]{0.32\linewidth}
        \centering
        \includegraphics[width=\linewidth]{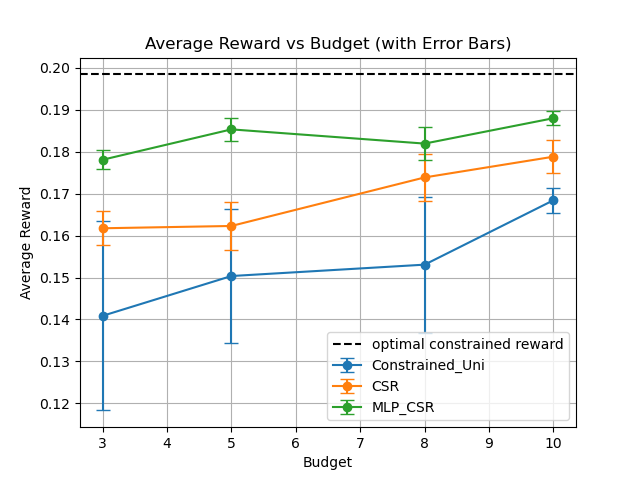}
        \caption{Constraint = 0.5}
        \label{fig:const_5}
    \end{subfigure}
    \hfill
    \begin{subfigure}[t]{0.32\linewidth}
        \centering
        \includegraphics[width=\linewidth]{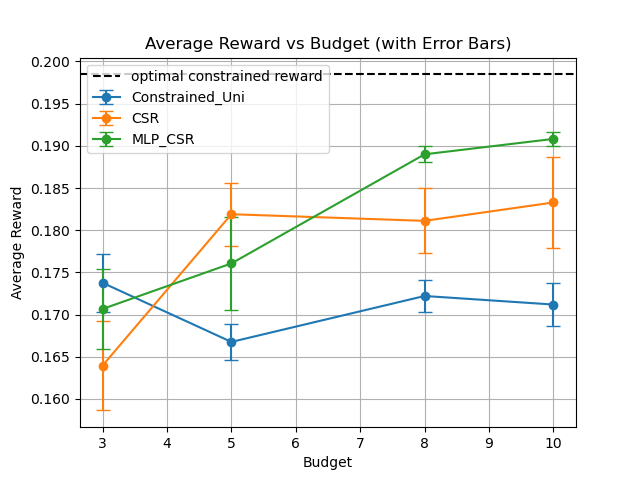}
        \caption{Constraint = 0.4}
        \label{fig:const_4}
    \end{subfigure}
    \caption{Feasible average reward vs. per-armed budget on XSum using Llama3 with varying constraints.}
    \label{fig:varying_constraint}
\end{figure}

\subsection{Computing resources and costs}

Our experiments are conducted on a server equipped with an AMD EPYC 9554 CPU, 755 GiB of system memory, and four NVIDIA H100 PCIe GPUs (80 GiB HBM each; driver 550.144.03) running CUDA 12.4.131. We employ Gemma and Llama-3 in quantized, inference-only settings. Generally, the experiments are expected to be reproduced on a server with over 20G GPU memory.

\section{Language Model usage in paper writing disclosure}

ChatGPT 5 is used during the writing of the paper, mainly for paraphrasing the sentences, improving the grammar, reorganizing the sentences in paragraphs, and generating a few paragraphs based on human-provided outlines. 
All the generated texts are reviewed and revised by humans. All technical claims, definitions, algorithms, theorems, and proofs are written by humans.

\end{document}